\documentclass{article}

\usepackage[preprint]{neurips_2026}

\usepackage[utf8]{inputenc} 
\usepackage[T1]{fontenc}    
\usepackage{hyperref}       
\usepackage{url}            
\usepackage{booktabs}       
\usepackage{amsfonts}       
\usepackage{nicefrac}       
\usepackage{microtype}      
\usepackage{xcolor}         
\usepackage{amsmath,amssymb}
\usepackage{graphicx}
\usepackage{float}
\usepackage{placeins}
\usepackage{amsthm}
\usepackage{booktabs}
\usepackage{subcaption}

\usepackage{amsthm}

\usepackage{subcaption}
\newtheorem{theorem}{Theorem}[section]

\newtheorem{proposition}[theorem]{Proposition}
\title{FrameVGGT: Coherence-Preserving Memory for Bounded Streaming Geometry}

\author{
  Zhisong Xu \\
  Institute of Industrial Science \\
  The University of Tokyo \\
  Tokyo, Japan \\
  \texttt{zhisongxv@cvl.iis.u-tokyo.ac.jp}
  \And
  Takeshi Oishi \\
  Institute of Industrial Science \\
  The University of Tokyo \\
  Tokyo, Japan \\
  \texttt{oishi@cvl.iis.u-tokyo.ac.jp}
}

\begin{document}

\maketitle

\begin{abstract}
Streaming Visual Geometry Transformers such as StreamVGGT enable strong online 3D perception, but their KV-cache grows unbounded over long streams, limiting practical deployment. We study bounded-memory streaming geometry from the perspective of memory organization: unlike language modeling, where useful information can often be compressed at token level, geometry-driven inference relies on coherent and mutually compatible observations across views. Under fixed memory budgets, retaining history as isolated entries can progressively fragment the geometric context needed for stable long-horizon matching and fusion. We therefore propose \textbf{FrameVGGT}, a bounded-memory framework that maintains a fixed-capacity set of complementary memory units for streaming geometry. In our implementation, each unit is instantiated as a frame-wise KV segment summarized by a compact key-space prototype, together with a sparse anchor tier for persistent long-range references. Across long-sequence 3D reconstruction, video depth estimation, and camera pose estimation, FrameVGGT achieves favorable accuracy--memory trade-offs under bounded budgets while maintaining more stable geometry over long streams.
\end{abstract}

\section{Introduction}

Dense 3D reconstruction from images is fundamental to robotic perception, navigation, and interaction~\cite{sheng2024review,barsan2018robust,turan2018sparse,mascaro2025scene,raychaudhuri2025semantic}.
While classical SfM/MVS pipelines achieve high fidelity through explicit geometric optimization, they are typically multi-stage and computationally expensive~\cite{andersen1998perception,saputra2018visual,hussain2021comprehensive}.
Recent feed-forward geometry models instead learn to infer cameras and dense geometry directly from multi-view observations~\cite{Wang_2024_CVPR,duisterhof2025mast3r,wang2025vggt,wang2025pi,lin2025depth,keetha2025mapanything}, but extending them to online long-horizon streams remains challenging because preserving all past context causes memory and latency to grow with sequence length.

Long-horizon reconstruction has been addressed in part by chunk-based or SLAM-style systems.
These methods control sequence length by dividing the stream into chunks or submaps, selecting keyframes, and aligning local reconstructions through optimization or geometric backends~\cite{deng2025vggt,maggio2025vggt,liu2025slam3r}.
While effective, such pipelines often introduce additional alignment stages, hand-designed keyframe policies, or iterative optimization, which weakens the simplicity of feed-forward inference.

A more end-to-end streaming alternative is to process frames causally while maintaining a bounded state or cache.
Implicit-state methods compress history into a latent representation, but this can weaken long-range constraints and induce drift~\cite{wang2025continuous,chen2025ttt3r}.
Explicit-memory methods instead cache past representations for reuse, but naive accumulation grows unbounded and therefore requires eviction or selection~\cite{zhuo2025streaming}.
Under bounded resources, recent explicit-memory methods often manage retained history at token granularity using importance, diversity, or eviction heuristics~\cite{liu2023scissorhands,zhang2023h2o,li2024snapkv,yuan2026infinitevggt,su2026xstreamvggt,mahdi2025evict3r,lu2026ovggt}.

\begin{figure}[t]
  \centering
  \includegraphics[width=\linewidth,page=1]{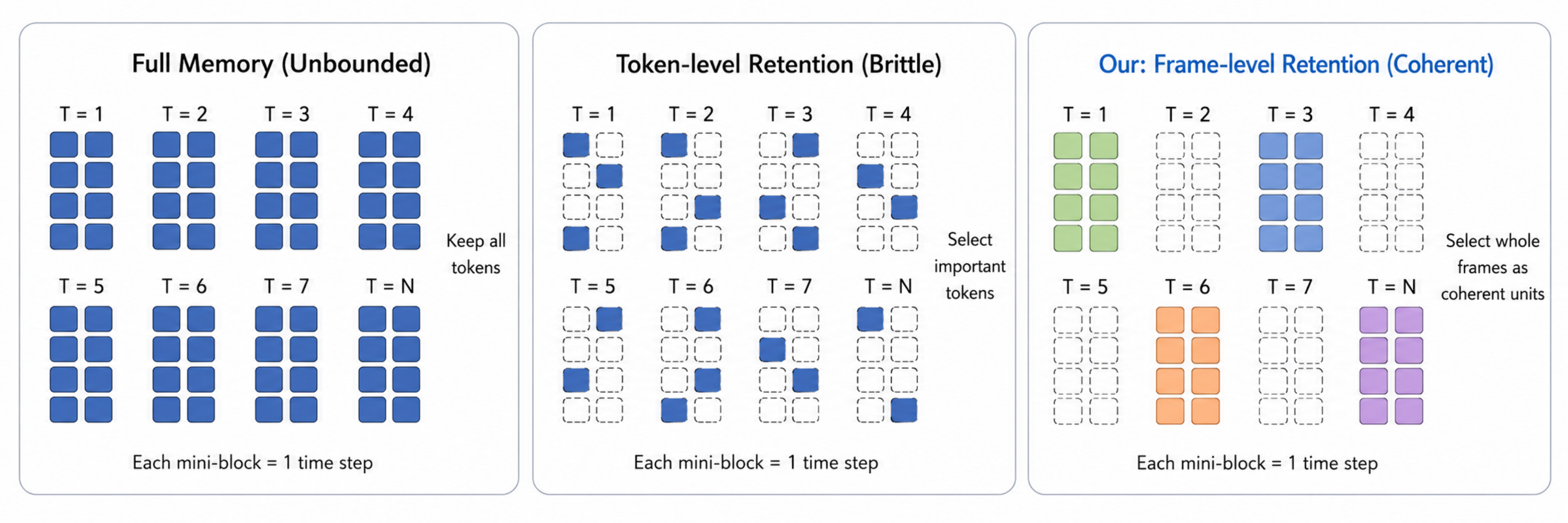}
  \caption{Memory organization strategies for bounded streaming geometry.
  Full memory grows unbounded over time, whereas token-level retention can fragment geometric context under fixed budgets.
  Frame-level retention preserves coherent frame-wise cache segments for downstream inference.}
  \label{fig:citecompare}
\end{figure}
However, a fixed memory budget is insufficient without proper cache management. As shown in Fig.~\ref{fig:citecompare}, cache selection granularity matters critically.
Token-level selection treats tokens independently. Because geometric tokens are noisy and unevenly informative, this leads to fragmented retention: each frame is represented by only a sparse subset of tokens, destroying structural coherence.
By contrast, frame-level retention keeps all tokens from a selected frame together as a coherent unit, preserving both its internal structure and cross-view compatibility.

This matters for geometry because, unlike language modeling, multi-view inference often relies on mutually compatible frame-level evidence rather than isolated tokens.
Once such evidence is fragmented, the cache may remain populated but provide weaker geometric support.
Thus, retention granularity and frame-level selection become key design axes for bounded geometric transformers.

We instantiate this perspective with \textbf{FrameVGGT}, an inference-time bounded-memory framework that retains history as coherent frame-wise units.
FrameVGGT maintains a fixed-capacity set of complementary memory units and adds a lightweight \textbf{sparse anchor tier} for persistent long-range references.
Together, they preserve structured multi-view context under a fixed memory budget, enabling stable long-horizon streaming geometry.
\noindent\textbf{Contributions.}

(1) We formulate \textbf{retention granularity} as a key design axis for bounded streaming geometry, showing that coherent geometric context matters beyond cache size alone.

(2) We provide diagnostics showing that token-level retention can induce context thinning, spatio-temporal fragmentation, and brittle fusion under weak redundancy.

(3) We introduce \textbf{FrameVGGT}, an inference-time two-tier memory framework that preserves frame-wise KV segments as coherent units and uses sparse anchors for long-range consistency under fixed memory budgets.

\section{Related Work}

\subsection{Geometry-based Reconstruction}
Classical 3D reconstruction relies on explicit geometric optimization. Offline pipelines such as SfM and MVS jointly estimate camera poses and scene structure over all input views~\cite{agarwal2011building,schonberger2016structure,furukawa2015multi,zhu2024slm}.
These approaches demonstrate promising accuracy and robustness, but they require full-scene access and computationally expensive global optimization. 
Online systems such as SLAM update motion and maps incrementally~\cite{mur2015orb,taketomi2017visual}. However, because they typically focus on sparse or semi-dense representations, dense long-horizon reconstruction remains challenging under strict memory budgets.

\subsection{Learning-based Reconstruction with Fixed-size Inputs}
Recent learning-based approaches perform feed-forward geometric inference and jointly predict camera parameters, depth, and correspondences across multiple views. 
However, these models assume fixed-size inputs, so memory usage and computational cost increase rapidly as the number of views grows. 
Several extensions improve scalability through sub-map decomposition, anchor-based representations~\cite{deng2025sail}, or token-level acceleration, but remain largely batch-oriented.

\subsection{Learning-based Reconstruction under Streaming Constraints}
Streaming methods process frames online while maintaining history. Explicit-memory approaches cache past features, tokens, or states for reuse~\cite{wang20253d,wu2025point3r,chen2025long3r}.
However, naive accumulation leads to growing memory and latency. 
Implicit-state approaches compress history into bounded latent states, which improves efficiency at the cost of weaker long-range constraints and increased drift. 
Windowed methods restrict computation to local temporal neighborhoods~\cite{li2025wint3r}, which stabilizes runtime but limits long-horizon consistency.
These methods mainly differ in how history is stored or compressed under streaming constraints. By contrast, we focus on memory organization under bounded streaming, emphasizing retention granularity and coherent multi-view context.


\section{Problem Definition and Analysis}
\label{sec:problem}

We consider online geometric inference over an unbounded image stream \(I=\{I_t\}_{t\ge1}\), where the framework outputs \(y_t\) sequentially. 
For each decoder layer \(l\) in the stream backbone (see Fig.~\ref{fig:streamvggt_memory_pipeline}), let \(\tilde{C}_t^{(l)}\) denote the retained Transformer cache at time \(t\), let \(C_{t,\mathrm{new}}^{(l)}\) denote the newly produced cache entries, and let \(\Omega(\cdot)\) denote a bounded-memory cache update operator. 
To maintain cache efficiency, we move from the unbounded cache update \(\tilde{C}_t^{(l)} = \tilde{C}_{t-1}^{(l)} \oplus C_{t,\mathrm{new}}^{(l)}\)
to a bounded-memory streaming regime:
\[
\tilde{C}_t^{(l)} =
\Omega \!\left(\tilde{C}_{t-1}^{(l)} \oplus C_{t,\mathrm{new}}^{(l)}\right),
\qquad
\text{s.t.}\quad
\sum_l |\tilde{C}_t^{(l)}| \le M,
\]
where \(|\cdot|\) denotes cache size and \(M \in \mathbb{N}\) is the maximum cache size.

We define geometric context as observations that jointly satisfy multi-view consistency. In this framework, cross-attention acts as differentiable feature matching—akin to SfM—rather than sparse semantic retrieval. Unlike language models that prioritize individual salient tokens, geometric inference relies on redundant, spatially distributed, and mutually compatible evidence. Consequently, token utility hinges on their retention as coherent groups, as only collective observations can sustain the matching and long-range consistency essential for geometric reconstruction.

We refer to this requirement as \emph{geometric coherence}.

Under bounded memory, token-level retention may degrade the retained context in several structurally important ways:
\begin{itemize}
    \item \textbf{Context thinning:} a fixed budget is spread over more frames, reducing the retained fraction per frame and weakening geometric evidence.
    \item \textbf{Frame-level fragmentation:} each frame may be represented by too few tokens to preserve within-frame structure or cross-view compatibility.
    \item \textbf{Directional concentration:} retained keys may align with a narrow local mode, biasing retrieval toward a small subset of memory.
\end{itemize}
These failure modes reduce correspondence diversity and weaken long-horizon consistency, especially when memory is controlled at overly fine granularity.

\paragraph{Directional Concentration Diagnostic.}
Under bounded memory, attention-based memory retrieval can become overly concentrated on a small temporally local subset of retained keys. 
To analyze this effect, we introduce a directional contrast statistic \(\Delta_k\) that measures how strongly a local subset is directionally separated from the rest of memory.


For a query step \(k\), let \(R_k\) denote a local subset of retained entries and \(N_k\) its complement, with \(R_k \cap N_k = \varnothing\).
Let \(\hat{\mathbf{k}}_i \in \mathbb{R}^{d_l}\) denote an \(\ell_2\)-normalized key vector at layer \(l\), where \(d_l\) is the key dimension. 
We define \(\boldsymbol{\mu}_R\) as the normalized mean direction of the local subset \(R_k\), as follows:
\[
\boldsymbol{\mu}_R =
\frac{\frac{1}{|R_k|}\sum_{i\in R_k}\hat{\mathbf{k}}_i}
{\left\|\frac{1}{|R_k|}\sum_{i\in R_k}\hat{\mathbf{k}}_i\right\|_2}.
\]
Let \(A_R\) denote the average directional alignment of the local subset \(R_k\) with respect to \(\boldsymbol{\mu}_R\), and \(A_N\) that of the non-local subset \(N_k\).
The directional contrast is defined as
\[
\Delta_k = A_R - A_N
=
\frac{1}{|R_k|}\sum_{i\in R_k}\cos(\hat{\mathbf{k}}_i,\boldsymbol{\mu}_R)
-
\frac{1}{|N_k|}\sum_{j\in N_k}\cos(\hat{\mathbf{k}}_j,\boldsymbol{\mu}_R).
\]
%
This statistic is relevant because retrieval is determined by the attention weights \(\alpha_i\).
Here, \(\alpha_i=\exp(s_i)/\sum_j \exp(s_j)\), and the pre-softmax similarity score is \(s_i=\mathbf{q}^\top \hat{\mathbf{k}}_i\), where \(\mathbf{q}\in\mathbb{R}^{d_l}\) denotes the query vector. 

When the query \(\mathbf{q}\) is positively aligned with \(\boldsymbol{\mu}_R\), a larger \(\Delta_k\) indicates that entries in \(R_k\) have a stronger average directional advantage than those in \(N_k\).
As a result, entries in \(R_k\) tend to receive higher similarity scores on average.
The softmax then amplifies this gap and concentrates more attention on the local subset.
This shifts probability mass away from non-local memory and reduces the effective context used for retrieval and fusion.
A formal treatment under a simplified two-group attention model is deferred to the appendix.


\paragraph{Why Frame-Level Retention?}
Retention granularity directly affects the stability of geometric inference under bounded memory.
If retained key groups preserve broader directional variation, directional concentration becomes weaker.
This reduces the risk of collapse into a narrow local mode.

When the local set \(R_k\) is partitioned into source-frame key segment \(\{S_t\}\), variance decomposition yields
\[
\Delta_k
\le
1-\frac{1}{2}\sum_t w_t \sigma_t^2 - A_N,
\qquad
w_t=\frac{|S_t|}{|R_k|},
\]
where \(S_t\) contains the retained key vectors originating from frame segment \(t\),
\(\bar{\mathbf{k}}_t\) is their group mean, and \(\sigma_t^2\) denotes their within-group directional dispersion around \(\bar{\mathbf{k}}_t\) (Appendix).
The set-level direction \(\boldsymbol{\mu}_R\) remains the common reference for \(A_R\), \(A_N\), and \(\Delta_k\).
Thus, larger within-group dispersion suppresses \(\Delta_k\), reducing the local subset's directional advantage over non-local entries and making retrieval less likely to collapse into a narrow local mode.

This explains why retention granularity matters.
Token-level retention can repeatedly keep only a few highly aligned keys from the same local mode, reducing dispersion, increasing \(A_R\), and strengthening concentration.
Frame-level retention instead keeps source-frame key groups together, preserving coherent geometric context and broader directional spread.
Under bounded memory, this acts as a geometric regularizer for stable multi-view matching and fusion over long horizons.

\begin{figure}[t]
  \centering
  \includegraphics[width=0.9\linewidth]{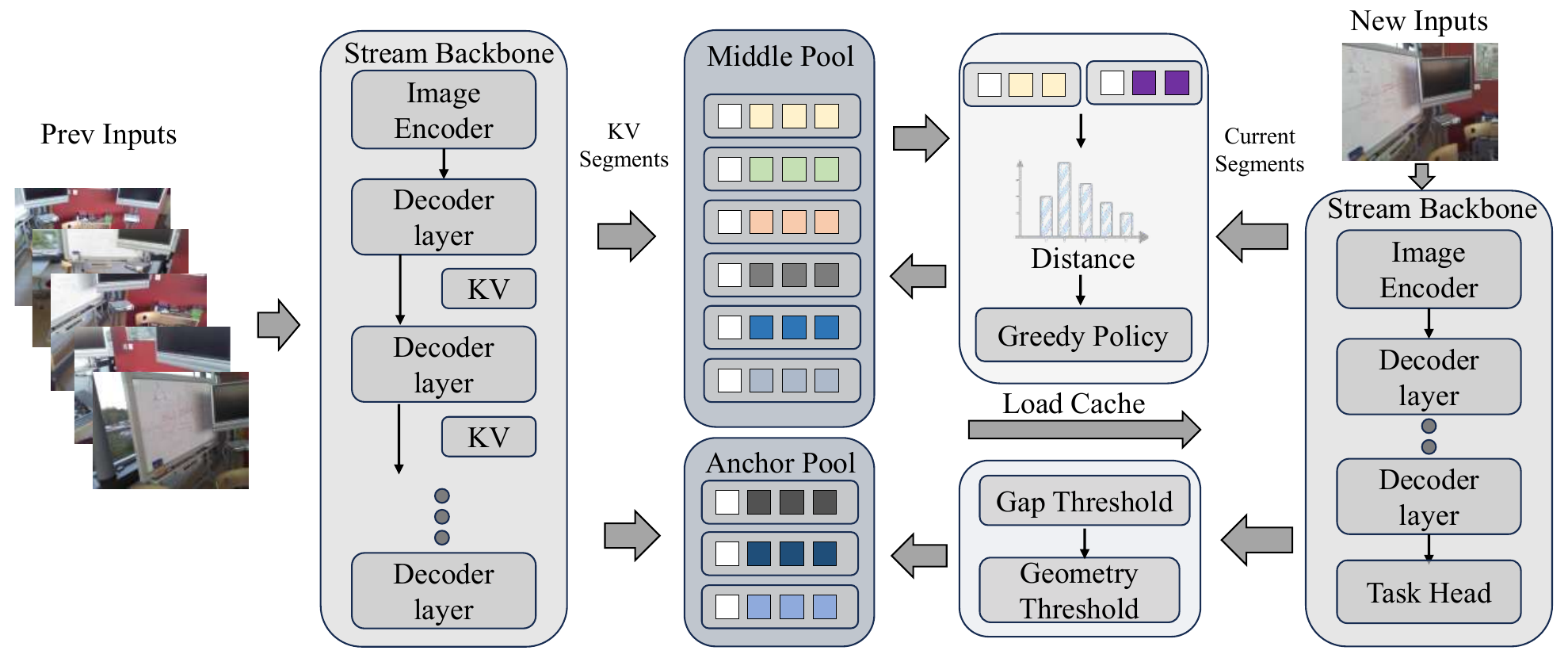} %
  \caption{Pipeline of FrameVGGT. Previous inputs are encoded into per-layer KV blocks, which are managed by a mid-term pool with a distance-based greedy selection policy and an anchor pool gated by gap and geometry thresholds. The selected cache is then loaded to condition new inputs for streaming inference.}
  \label{fig:streamvggt_memory_pipeline}
\end{figure}

\section{Method}
\label{sec:method}
We formulate bounded streaming geometry as a memory-constrained problem.
The key requirement is to preserve retrievable multi-view information that supports geometric matching and fusion.
FrameVGGT addresses this problem with a two-tier memory design based on a mid-term memory bank and a lightweight sparse anchor memory.
Figure~\ref{fig:streamvggt_memory_pipeline} shows the pipeline of our framework. 
The method consists of three components: 1) retrieval-space scoring via segment dissimilarity, 2) a mid-term memory bank for complementary coverage, and 3) a sparse anchor memory for long-range persistence.

\paragraph{Retrieval-Space Scoring via Segment Dissimilarity.}
The analysis in Sec.~3 suggests that retention should favor complementary memory units over near-duplicate ones, as this increases intra-group dispersion $\sigma_t^2$ and mitigates directional concentration.
Accordingly, we define a retrieval-space scoring scheme for frame-wise memory units based on segment dissimilarity.

We organize memory into frame-wise memory units. 
For each time step \(t\) and layer \(l\), let \(S_t^{(l)}\) denote the KV segment produced by frame \(t\). Let \(H_l\) be the number of attention heads, \(d_l\) the per-head key dimension, and \(\mathcal{T}_t\) the token index set introduced at frame \(t\). 
The segment keys are \(\{\mathbf{k}_{t,h,\tau}^{(l)}\in\mathbb{R}^{d_l}\}_{h=1,\tau\in\mathcal{T}_t}^{H_l}\). 
To enable efficient inter-segment comparison, we compute the mean key of each segment in retrieval space,
\begin{equation}
\mathbf{v}_t^{(l)}
=
\frac{1}{H_l\,|\mathcal{T}_t|}
\sum_{h=1}^{H_l}\sum_{\tau\in\mathcal{T}_t}\mathbf{k}_{t,h,\tau}^{(l)},
\end{equation}
followed by \(\ell_2\) normalization,
\[
\bar{\mathbf{v}}_t^{(l)}=\mathbf{v}_t^{(l)}/\|\mathbf{v}_t^{(l)}\|_2.
\]
Rather than explicitly modeling spatial topology, this mean key serves only as a lightweight retrieval-space proxy for segment selection. It enables online estimation of redundancy and complementarity across segments, while the selected segment itself is retained with its full KV entries.

We measure inter-segment dissimilarity by cosine distance of the mean keys,
\begin{equation}
d\!\left(S_i^{(l)},S_j^{(l)}\right)
=
1-\left\langle \bar{\mathbf{v}}_i^{(l)},\bar{\mathbf{v}}_j^{(l)}\right\rangle.
\end{equation}
Because memory access is governed by key--query compatibility, we perform selection directly in retrieval space. Large \(\Delta_k\) reflects concentration around a narrow local mode, whereas our scoring scheme suppresses near-duplicate segments and preserves under-covered directions, thereby limiting \(\Delta_k\) and maintaining broad retrievable support.


\paragraph{Mid-Term Memory Bank.}
The key role of the mid-term bank is to preserve broad coverage in retrieval space under a fixed memory budget.
Rather than keeping many redundant nearby segments, it retains a small set of complementary segments that covers under-represented regions.

Formally, given a candidate segment set \(\mathcal{M}_t\), the mid-term bank seeks a subset \(Q \subseteq \mathcal{M}_t\) with \(|Q| \le M_{\mathrm{mid}}\), where \(M_{\mathrm{mid}}\) is the bank capacity, that minimizes the worst-case coverage gap,
\begin{equation}
Q^\ast
=
\arg\min_{Q\subseteq \mathcal{M}_t}
\max_{S\in \mathcal{M}_t}\min_{S'\in Q} d(S,S').
\end{equation}
This objective ensures that every candidate segment remains close to at least one retained segment.
As a result, it discourages redundant selections from dense local clusters and instead favors complementary segments that improve overall coverage.

To optimize this objective, we approximate it online using greedy farthest-first selection initialized from the most recent segment.
For each candidate segment \(S\in\mathcal{M}_t\), we maintain its nearest-set distance
\begin{equation}
m(S)=\min_{S'\in Q} d(S,S').
\end{equation}
Larger \(m(S)\) indicates that \(S\) lies in an under-covered region of retrieval space.
Starting from the most recent segment, we iteratively augment \(Q\) with the candidate of largest \(m(S)\).
Let \(S_{\mathrm{new}}\) denote the newly selected segment.
We then update
\begin{equation}
m(S)\leftarrow \min\!\bigl(m(S),\, d(S,S_{\mathrm{new}})\bigr), \qquad \forall\, S\in\mathcal{M}_t .
\end{equation}
This procedure provides an efficient online approximation that maintains a broad and non-redundant retrieval basis under a fixed budget.

At each step, the current frame produces a per-layer frame segment \(S_t^{(l)}\), which is appended to the candidate pool.
When the pool exceeds capacity, the retained subset is refreshed by the above rule and reassembled into the bounded KV memory for the next step.
Selection is performed independently for each layer.

\paragraph{Sparse Anchor Memory.}
The mid-term bank provides the primary bounded working memory, but local context may degrade over very long streams.
We therefore maintain a sparse anchor bank \(A_t\), with \(|A_t|\le B_A\), to preserve long-range references without excessive local redundancy.
The first frame is always retained as a persistent global reference and is excluded from replacement.

For a candidate frame \(i\), anchor promotion is governed by temporal spacing, frame reliability, and novelty:
\[
A_t=
\begin{cases}
\operatorname{FIFO}_{B_A}\!\left(A_{t-1}\cup\{i\}\right),
& \text{if } \Delta_i\ge G,\ \Phi(i)\ge\tau_{\mathrm{conf}},\ \nu(i)\ge\tau_{\mathrm{novel}},\\[4pt]
A_{t-1},
& \text{otherwise.}
\end{cases}
\]
Here, \(\Delta_i=i-i_{\mathrm{last}}\) measures the gap from the last promoted anchor, and \(G\) is the minimum spacing.
The reliability score \(\Phi(i)=q_i s_i\) combines normalized prediction confidence \(q_i\) and Laplacian sharpness \(s_i\), while \(\nu(i)\) measures the minimum distance to existing anchor pose signatures.
The thresholds \(\tau_{\mathrm{conf}}\) and \(\tau_{\mathrm{novel}}\) are fixed across datasets.
FIFO is applied only to historical anchors, so the first-frame reference is never evicted.
This compact tier extends long-range coverage beyond the mid-term bank.

\section{Experiments}

\subsection{Experimental Setup}
\label{sec:exp_setup}
\begin{table}[t]
\centering
\footnotesize
\setlength{\tabcolsep}{3pt}
\renewcommand{\arraystretch}{1.08}
\caption{Reconstruction results on 7-Scenes and NRGBD. Best results are shown in bold.}
\label{tab:recon_main}
\resizebox{\linewidth}{!}{%
\begin{tabular}{l cccccc|cccccc}
\toprule
{Method}
& \multicolumn{6}{c|}{7-Scenes}
& \multicolumn{6}{c}{NRGBD}
\\
\cmidrule(lr){2-7}\cmidrule(lr){8-13}
& Acc$\downarrow$ & Acc$_{\text{med}}\downarrow$ & Comp$\downarrow$ & Comp$_{\text{med}}\downarrow$ & NC$\uparrow$ & NC$_{\text{med}}\uparrow$
& Acc$\downarrow$ & Acc$_{\text{med}}\downarrow$ & Comp$\downarrow$ & Comp$_{\text{med}}\downarrow$ & NC$\uparrow$ & NC$_{\text{med}}\uparrow$
\\
\midrule
CUT3R        & 0.181 & 0.127 & 0.095 & 0.0327 & 0.525 & 0.531 & 0.322 & 0.237 & 0.128 & 0.0334 & 0.553 & 0.566 \\
Point3R      & 0.063 & 0.026 & 0.031 & 0.0150 & 0.559 & 0.589 & 0.113 & 0.048 & 0.037 & 0.0060 & 0.621 & 0.696 \\
TTT3R        & 0.060 & 0.035 & 0.028 & 0.0050 & 0.560 & 0.588 & 0.161 & 0.082 & 0.093 & 0.0147 & 0.602 & 0.654 \\
XStreamVGGT  & 0.105 & 0.052 & 0.048 & 0.0100 & 0.556 & 0.583 & 0.128 & 0.080 & 0.043 & 0.0069 & 0.621 & 0.703\\
InfiniteVGGT & 0.041 & 0.016 & 0.024 & 0.0047 & 0.561 & 0.593 & 0.078 & 0.049 & 0.035 & 0.0069 & 0.647 & 0.757 \\
OVGGT*       & 0.033 & 0.011 & 0.020 & 0.0040 & 0.560 & 0.591 & 0.056 & 0.036 & 0.032 & 0.0069 & 0.635 & 0.731 \\
Ours(12)     & 0.035 & 0.013 & 0.020 & 0.0048 & 0.562 & 0.594 & 0.054 & 0.036 & 0.031 & 0.0068 & 0.656 & 0.764 \\
Ours(16)     & 0.033 & 0.010 & 0.019 & 0.0044 & \textbf{0.564} & 0.597 & \textbf{0.053} & \textbf{0.035} & \textbf{0.030} & 0.0061 & 0.663 & 0.773 \\
Ours(20)     & \textbf{0.027} & \textbf{0.008} & \textbf{0.018} & \textbf{0.0038} & 0.563 & 0.595 & \textbf{0.053} & 0.036 & 0.033 & 0.0067 & 0.661 & 0.776 \\
Ours(24)     & 0.028 & 0.009 & 0.019 & 0.0040 & \textbf{0.564} & \textbf{0.598} & 0.054 & \textbf{0.035} & \textbf{0.030} & \textbf{0.0060} & \textbf{0.670} & \textbf{0.782} \\
\bottomrule
\end{tabular}%
}
\end{table}

We evaluate FrameVGGT on \emph{online 3D reconstruction}, \emph{video depth estimation}, and \emph{monocular camera pose estimation} under a strict \emph{resource-controlled protocol}: pretrained weights, frame sampling, and inference pipeline are fixed, while only the bounded-memory organization varies under comparable KV-cache budgets.
Baselines marked with \(*\) indicate contemporaneous token-level streaming methods.
FrameVGGT is used purely at inference time without retraining or fine-tuning, and memory denotes the effective streaming KV-cache footprint.
All experiments use a single NVIDIA RTX A6000 GPU.

\paragraph{Memory-budget sweep.}
We vary the mid-term capacity $M$, i.e., the maximum number of refreshable frame segments retained in the mid-term bank, over $M\in\{12,16,20,24\}$.
This tests whether preserving broader complementary coverage improves long-horizon performance, and whether gains saturate once the dominant memory structure is covered.

\paragraph{Timescale-separation diagnostic.}
To isolate the role of sparse persistent references, we compare \textbf{24 mid-term + 0 anchor} against \textbf{20 mid-term + 4 anchor} under the same total memory budget.
This tests whether long-range references provide benefits beyond the primary mid-term memory bank.

\paragraph{Short-term continuity vs.\ complementary context.}
We evaluate a \textbf{Recent-$K$} variant that reserves capacity for the last $K$ frames ($K\in\{2,4,6\}$) and allocates the rest to mid-term memory.
This tests whether bounded memory should prioritize recent continuity or complementary context under a fixed budget.

\begin{figure}[t]
    \centering
    \includegraphics[width=\linewidth,page=1]{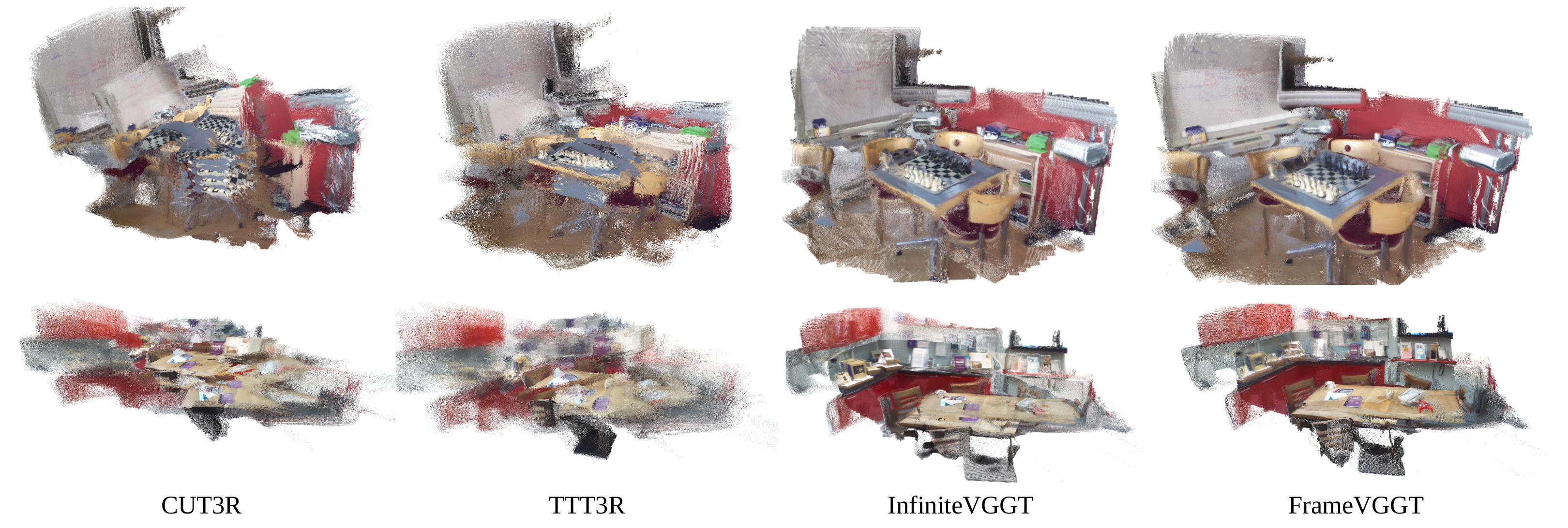}
    \caption{Reconstruction visualization comparison on the 7scenes dataset. Our method ensures robust geometric consistency over extended sequences, maintaining a highly stable and reliable result.}
    \label{fig:recon_vis}
\end{figure}

\begin{table}[t]
\centering
\footnotesize
\setlength{\tabcolsep}{4pt}
\renewcommand{\arraystretch}{1.1}
\caption{Reconstruction results on Long3D. Best results are shown in bold.}
\label{tab:long3d_main}
\begin{tabular}{lcccccc}
\toprule
{Method}
& Acc$\downarrow$ & Acc$_{\text{med}}\downarrow$ & Comp$\downarrow$ & Comp$_{\text{med}}\downarrow$ & NC$\uparrow$ & NC$_{\text{med}}\uparrow$
\\
\midrule
CUT3R        & 3.217 & 2.235 & 1.605 & 1.351 & 0.500 & 0.491 \\
TTT3R        & 2.964 & 2.310 & 2.265 & 1.805 & 0.507 & 0.514 \\
InfiniteVGGT & 2.138 & 1.728 & \textbf{0.963} & \textbf{0.364} & 0.513 & 0.526 \\
OVGGT*       & 2.148 & 1.687 & 1.594 & 0.574 & 0.526 & 0.530 \\
Ours         & \textbf{1.698} & \textbf{1.331} & 1.032 & 0.433 & \textbf{0.536} & \textbf{0.540} \\
\bottomrule
\end{tabular}
\end{table}

\begin{table*}[t]
\centering
\scriptsize

\setlength{\tabcolsep}{4pt}
\renewcommand{\arraystretch}{1.1}
\captionof{table}{Performance comparison across anchor configurations. Best results are shown in bold.}
\label{tab:anchor_ablation_cases}
\resizebox{\linewidth}{!}{%
\begin{tabular}{l cccccc|cccccc}
\toprule
{Scene}
& \multicolumn{6}{c|}{Anchor=0}
& \multicolumn{6}{c}{Anchor=4}
\\
\cmidrule(lr){2-7}\cmidrule(lr){8-13}
& Acc$\downarrow$ & Acc$_{\text{med}}\downarrow$ & Comp$\downarrow$ & Comp$_{\text{med}}\downarrow$ & NC$\uparrow$ & NC$_{\text{med}}\uparrow$
& Acc$\downarrow$ & Acc$_{\text{med}}\downarrow$ & Comp$\downarrow$ & Comp$_{\text{med}}\downarrow$ & NC$\uparrow$ & NC$_{\text{med}}\uparrow$
\\
\midrule
office/seq-09
& 0.0429 & \textbf{0.0088} & 0.0126 & \textbf{0.0044} & 0.5627 & 0.6085
& \textbf{0.0275} & \textbf{0.0088} & \textbf{0.0119} & \textbf{0.0044} & \textbf{0.5758} & \textbf{0.6163}
\\
stairs/seq-01
& 0.1392 & 0.0556 & \textbf{0.0915} & \textbf{0.0077} & 0.5411 & 0.5585
& \textbf{0.1250} & \textbf{0.0521} & 0.0932 & 0.0081 & \textbf{0.5565} & \textbf{0.5726}
\\
\bottomrule
\end{tabular}%
}

\vspace{0.8em}

\setlength{\tabcolsep}{4pt}
\renewcommand{\arraystretch}{1.1}
\captionof{table}{Recent ablation on 7-Scenes and NRGBD under a fixed total budget ($M_{\text{total}}{=}16$). Recent-$0$ corresponds to Ours ($M{=}16$). Best results are shown in bold.}
\label{tab:recon_recent_sweep}
\resizebox{\linewidth}{!}{%
\begin{tabular}{l cccccc|cccccc}
\toprule
{Method}
& \multicolumn{6}{c|}{7-Scenes}
& \multicolumn{6}{c}{NRGBD}
\\
\cmidrule(lr){2-7}\cmidrule(lr){8-13}
& Acc$\downarrow$ & Acc$_{\text{med}}\downarrow$ & Comp$\downarrow$ & Comp$_{\text{med}}\downarrow$ & NC$\uparrow$ & NC$_{\text{med}}\uparrow$
& Acc$\downarrow$ & Acc$_{\text{med}}\downarrow$ & Comp$\downarrow$ & Comp$_{\text{med}}\downarrow$ & NC$\uparrow$ & NC$_{\text{med}}\uparrow$
\\
\midrule
Recent-0 & \textbf{0.033} & \textbf{0.010} & \textbf{0.019} & \textbf{0.0044} & \textbf{0.564} & \textbf{0.597} & \textbf{0.053} & \textbf{0.035} & \textbf{0.030} & \textbf{0.0061} & \textbf{0.663} & \textbf{0.773} \\
Recent-2 & 0.037 & 0.013 & 0.020 & \textbf{0.0044} & 0.562 & 0.593 & 0.056 & 0.040 & 0.031 & 0.0070 & 0.656 & 0.768 \\
Recent-4 & 0.053 & 0.017 & 0.024 & 0.0050 & 0.558 & 0.587 & 0.066 & 0.045 & 0.031 & 0.0073 & 0.637 & 0.734 \\
Recent-6 & 0.069 & 0.025 & 0.027 & 0.0050 & 0.555 & 0.583 & 0.085 & 0.059 & 0.034 & 0.0080 & 0.623 & 0.707 \\
\bottomrule
\end{tabular}%
}

\vspace{1em}

\begin{minipage}[t]{0.48\linewidth}
\centering
\scriptsize
\setlength{\tabcolsep}{6pt}
\renewcommand{\arraystretch}{1.1}
\captionof{table}{Video depth estimation results on Bonn. Best results are shown in bold.}
\label{tab:depth_main}
\begin{tabular}{lcc}
\toprule
Method & Abs Rel$\downarrow$ & $\delta<1.25\uparrow$ \\
\midrule
CUT3R        & 0.0831 & 0.9402 \\
Point3R      & 0.0809 & 0.9513 \\
TTT3R        & 0.0752 & 0.9587 \\
XStreamVGGT  & 0.0807 & 0.9697 \\
InfiniteVGGT & 0.0560 & \textbf{0.9801} \\
OVGGT*       & 0.0658 & 0.9558 \\
Ours (12)    & 0.0526 & 0.9793 \\
Ours (16)    & 0.0525 & \textbf{0.9801} \\
Ours (20)    & 0.0514 & 0.9799 \\
Ours (24)    & \textbf{0.0512} & 0.9799 \\
\bottomrule
\end{tabular}
\end{minipage}\hfill
\begin{minipage}[t]{0.48\linewidth}
\centering
\scriptsize
\setlength{\tabcolsep}{7pt}
\renewcommand{\arraystretch}{1.1}
\captionof{table}{Recent ablation on Bonn under a fixed total budget ($M_{\text{total}}{=}16$). Recent-$0$ corresponds to Ours ($M{=}16$). Best results are shown in bold.}
\label{tab:depth_slidewindow}
\begin{tabular}{lcc}
\toprule
Sliding Window & Abs Rel$\downarrow$ & $\delta<1.25\uparrow$ \\
\midrule
Recent-$0$ & \textbf{0.0525}  & \textbf{0.9801} \\
Recent-$2$ & 0.0535& 0.9794\\
Recent-$4$ & 0.0562& 0.9788\\
Recent-$6$ & 0.0589& 0.9780 \\
\bottomrule
\end{tabular}
\end{minipage}

\label{tab:appendix_results_block}
\end{table*}

\subsection{3D Reconstruction}

\paragraph{Protocol and metrics.}
We evaluate 3D reconstruction on \textsc{7-Scenes}, \textsc{NRGBD}~\cite{blanton2020extending,azinovic2022neural}, and \textsc{Long3D}, reporting Accuracy (Acc, $\downarrow$), Completeness (Comp, $\downarrow$), and Normal Consistency (NC, $\uparrow$). For \textsc{7-Scenes} and \textsc{NRGBD}, we test long-horizon sequences (up to 1000 frames, stride 2) under a unified bounded-memory protocol. For \textsc{Long3D} (2k--10k frames), predicted point clouds are downsampled to 2\% to align with ground-truth density.

\paragraph{Effect of mid-term capacity.}
Tab.~\ref{tab:recon_main}, \ref{tab:long3d_main} and Fig.~\ref{fig:recon_vis} demonstrate that token-level methods degrade on long sequences due to context fragmentation, which reduces surface coverage and introduces artifacts. In contrast, our coherence-aligned memory preserves multi-view context, yielding superior accuracy--memory trade-offs. By operating on frame-level prototypes, \textsc{FrameVGGT} bypasses the overhead of fine-grained token sampling, achieving significantly higher throughput than baselines. For instance, our $1.9$--$3.7$\,GB footprint (12--24 blocks) is substantially more efficient than \textsc{InfiniteVGGT} (6.9\,GB) or \textsc{XStreamVGGT} (10.3\,GB).

\paragraph{Global-anchor diagnostic.}
Tab.~\ref{tab:anchor_ablation_cases} shows that anchors enhance robustness during unreliable mid-term context (e.g., blur, occlusion, or weak parallax) by providing persistent references. While their impact is minimal in regular scenarios, anchors mitigate error accumulation in degraded cases. Anchor-only baselines (FIFO, random, uniform) consistently underperform, confirming that anchors serve as a vital robustness-oriented complement to mid-term memory.

\paragraph{Recent vs.\ mid-term evidence.}
Tab.~\ref{tab:recon_recent_sweep} shows that forcing a Recent-$K$ buffer often degrades reconstruction quality.
Recency-biased buffering over-allocates memory to highly overlapping adjacent frames while displacing more complementary mid-term memory, reducing effective scene coverage and weakening the support needed to maintain surface completeness and geometric consistency over long horizons.

\subsection{Video Depth Estimation}

\paragraph{Protocol and metrics.}
We evaluate streaming video depth estimation on \textsc{Bonn}~\cite{palazzolo2019refusion} over sequences of up to 500 frames, reporting Abs Rel ($\downarrow$) and $\delta<1.25$ ($\uparrow$) after per-sequence scale alignment.
The evaluation protocol is fixed across all memory settings.

\paragraph{Effect of mid-term capacity.}
Tab.~\ref{tab:depth_main} shows that our frame-level memory maintains strong depth accuracy even under tight bounded-memory settings.
Compared with reconstruction and pose, video depth is less sensitive to memory size, suggesting that much of the required support remains relatively local.
Still, increasing the mid-term capacity $M$ yields modest but consistent improvements in Abs Rel, indicating that additional mid-term context remains beneficial when local evidence becomes ambiguous or insufficient.
The overall trend also suggests an earlier saturation regime: once nearby context is already strong, additional history provides diminishing returns.

\paragraph{Recent vs.\ mid-term evidence.}
Tab.~\ref{tab:depth_slidewindow} shows that reserving memory for the most recent frames does not improve depth estimation, and performance gradually declines as $K$ increases. This suggests that, as in 3D reconstruction, bounded memory is more effective when used to preserve complementary context rather than redundant adjacent views.

\subsection{Camera Pose Estimation}
\label{sec:exp_pose}

\paragraph{Protocol and metrics.}
We evaluate streaming camera pose estimation on \textsc{TUM-dynamics}~\cite{6385773} using sequences of up to 300 frames.
We report ATE, RPE${}_{\text{trans}}$, and RPE${}_{\text{rot}}$ (all $\downarrow$) after Sim(3) Umeyama alignment, used only for global similarity normalization.

\paragraph{Effect of mid-term capacity.}
Similar trends are observed for pose estimation.
Tab.~\ref{tab:pose_main} and Fig.~\ref{fig:tum_pose_vis} show that token-level streaming suffers from accumulated error and drift as sequences grow. In contrast, our frame-level memory preserves complementary multi-view context over time, resulting in more stable pose estimation and lower ATE and RPE over long sequences. Increasing the mid-term capacity $M$ consistently improves pose accuracy, with diminishing returns at larger budgets.

\paragraph{Recent vs.\ mid-term evidence.}
Tab.~\ref{tab:pose_slidewindow} shows that forcing a Recent-$K$ buffer is insufficient for long-horizon pose stability. Consistent with 3D reconstruction, allocating more memory to recent frames reduces capacity for complementary mid-term context and weakens long-range geometric constraints.

\begin{figure}[t]
    \centering
    \includegraphics[width=\linewidth,page=1]{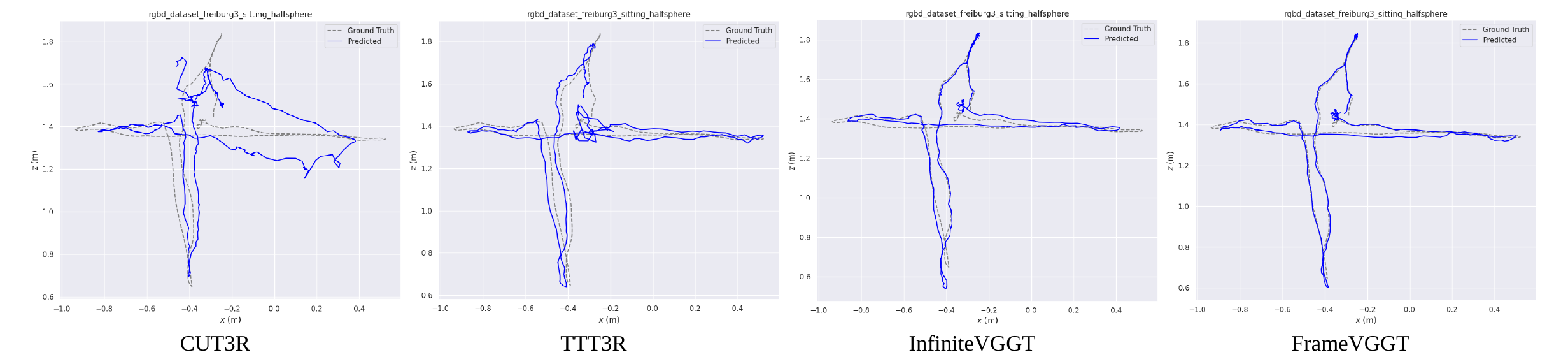}
    \caption{Pose visualization comparison on the TUM dataset. Our method maintains a robustly stable trajectory and ensures geometric integrity even over exceptionally long-duration sequences.}
    \label{fig:tum_pose_vis}
\end{figure}
\begin{table}[t]
\centering
\begin{minipage}[t]{0.48\linewidth}
\centering
\scriptsize
\setlength{\tabcolsep}{5pt}
\renewcommand{\arraystretch}{1.1}
\caption{Pose estimation results on TUM. Best results are shown in bold.}
\label{tab:pose_main}
\begin{tabular}{lccc}
\toprule
Method & ATE$\downarrow$ & RPE$_t\downarrow$ & RPE$_r\downarrow$ \\
\midrule
CUT3R        & 0.1089 & 0.0148 & 0.619 \\
Point3R      & 0.1387 & 0.0369 & 2.347 \\
TTT3R        & 0.0620 & 0.0147 & 0.618 \\
XStreamVGGT  & 0.0728 & 0.0262 &
0.580 \\
InfiniteVGGT & 0.0478 & 0.0138 & 0.350 \\
OVGGT*       & 0.0561 & 0.0215 & 0.448 \\
Ours (12)    & 0.0387 & 0.0140 & 0.346 \\
Ours (16)    & 0.0386 & 0.0137 & 0.342 \\
Ours (20)    & 0.0389 & 0.0135 & 0.339 \\
Ours (24)    & \textbf{0.0385} & \textbf{0.0133} & \textbf{0.336} \\
\bottomrule
\end{tabular}
\end{minipage}\hfill
\begin{minipage}[t]{0.48\linewidth}
\centering
\scriptsize
\setlength{\tabcolsep}{5pt}
\renewcommand{\arraystretch}{1.1}
\caption{Recent ablation on TUM under a fixed total budget ($M_{\text{total}}{=}16$). Recent-$0$ corresponds to Ours ($M{=}16$). Best results are shown in bold.}
\label{tab:pose_slidewindow}
\begin{tabular}{lccc}
\toprule
Sliding Window & ATE$\downarrow$ & RPE$_t\downarrow$ & RPE$_r\downarrow$ \\
\midrule
Recent-$0$ & \textbf{0.0386} & \textbf{0.0137} & 0.342 \\
Recent-$2$ & 0.0399 & 0.0138 & \textbf{0.341} \\
Recent-$4$ & 0.0410 & 0.0140 & 0.342 \\
Recent-$6$ & 0.0432 & 0.0140 & \textbf{0.341} \\
\bottomrule
\end{tabular}
\end{minipage}
\end{table}

\section{Discussion}
\label{sec:discussion}
\subsection{Interpretation of Results}

Across all tasks, FrameVGGT achieves the best accuracy--memory frontier, outperforming token-level bounded-memory baselines with much smaller KV-cache footprints. This suggests that, in bounded streaming geometry, memory organization matters as much as capacity: coherent frame-level context is more effective than fragmented token subsets. Performance is often limited more by complementary geometric coverage than by raw token count, and gains saturate once memory spans the key viewpoints and baselines. By contrast, token-level selection tends to collapse onto a narrow local subset, weakening cross-frame complementarity and causing long-horizon drift, while recent frames alone often provide insufficient geometric baseline. Frame-level retention instead preserves coherent units and broader complementary coverage, yielding more stable long-range performance under fixed memory.

\paragraph{Limitations and Future Work.}
This work studies bounded streaming geometry under a controlled memory design space. 
Our current formulation uses a fixed inference-time policy to isolate the effect of memory organization from training and pipeline changes. 
While the sparse anchor tier provides a lightweight mechanism for long-range reference retention, this study does not exhaust the broader design space of temporal memory allocation. 
Future work may extend the framework with data-driven memory scoring, scene-adaptive allocation, and more flexible long-timescale reference selection to further improve efficiency and stability across diverse streaming scenarios.
\clearpage
{\small
\bibliographystyle{unsrt}
\bibliography{reference}
}
\clearpage


\appendix
\begin{figure*}[t]
    \centering
    \includegraphics[width=\textwidth]{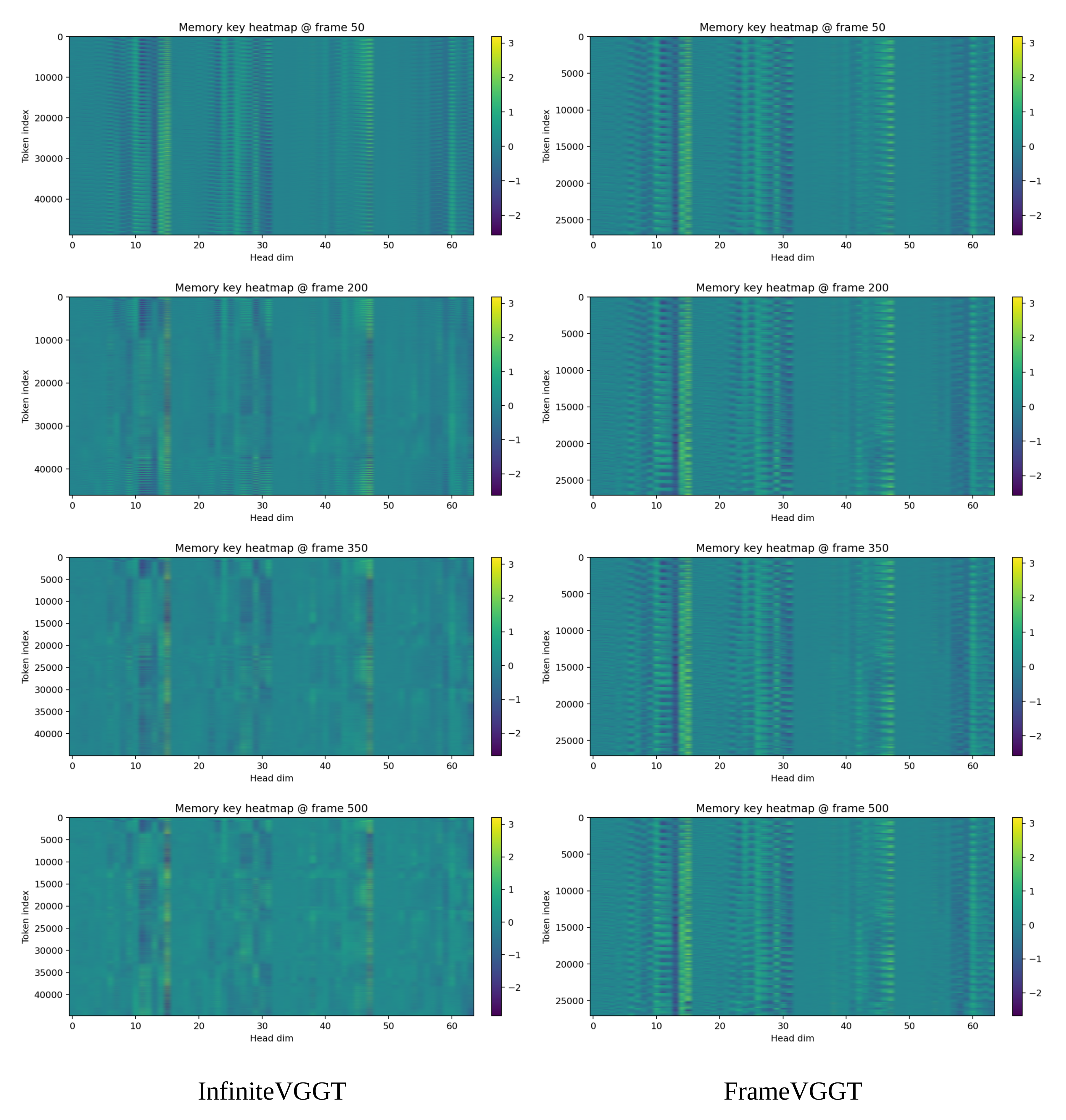}
    \caption{Visualization of memory-key heatmaps at different timesteps. 
        Left: InfiniteVGGT. Right: FrameVGGT.
        Heatmaps visualize saved memory key snapshots at different checkpoints.
        Rows are retained memory tokens (internal memory order) and columns are key dimensions (one layer/head slice; batch=0 and head=0 when applicable).
        Color encodes the key value; the colorbar is auto-scaled per snapshot (ticks chosen automatically), so absolute intensities are not directly comparable across checkpoints.
        Banded patterns indicate groups of tokens with similar key profiles (higher row-wise correlation), while diffuse patterns indicate more heterogeneous keys.}
    \label{fig:memory_key_heatmap}
\end{figure*}
\section{Additional Diagnostics for Token-Level Compression in Geometric Streaming}
\label{app:theory}

This appendix complements Sec.~3 with additional diagnostics for the granularity effects discussed in the main text.
We introduce lightweight proxies that illustrate several degradation patterns observed under bounded token-level retention, while keeping the notation consistent with Sec.~\ref{sec:problem}.

\subsection{Problem Setup}

Consider a video stream of length \(T\), where frame \(t\) produces a set of token vectors
\[
X_t=\{\mathbf{x}_{t,1},\dots,\mathbf{x}_{t,N_t}\},
\qquad
\mathbf{x}_{t,i}\in\mathbb{R}^{d}.
\]
Here, \(T\), \(N_t\), and \(d\) are scalars, \(X_t\) is a set, and \(\mathbf{x}_{t,i}\) is a vector.
Under a bounded-memory policy, a retained subset \(\widetilde{X}_t\subseteq X_t\) is maintained under the global KV-cache budget
\[
\sum_{t=1}^{T} |\widetilde{X}_t| \le M,
\]
where \(M\) is the total memory budget.

As emphasized in the main text, the key question is not only how many entries are retained, but whether the retained subset preserves sufficient \emph{geometrically usable context} for downstream inference.
For geometric prediction, the utility of retained memory depends not only on individual entries, but also on whether they remain organized as coherent observations that jointly support matching, fusion, and long-range consistency.

\paragraph{Geometric-context proxy.}
To make this intuition explicit, we introduce a qualitative scalar proxy \(s(\cdot)\) that reflects the amount of usable geometric context preserved by a retained subset.
Possible instantiations include spatial coverage, reprojection consistency, or other task-dependent measures.
We emphasize that \(s(\cdot)\) is not uniquely defined and is used only as a conceptual diagnostic.

We then define the context-damage proxy
\[
d_t = s(X_t)-s(\widetilde{X}_t),
\]
and the relative context ratio
\[
\rho_t = \frac{s(\widetilde{X}_t)}{s(X_t)}.
\]

These quantities are not used for formal guarantees, but serve as diagnostics of how compression affects geometrically usable context.

\subsection{Context Thinning Under Bounded Memory}

Under a fixed global budget, retained memory must be distributed across an increasing number of frames as the sequence grows.
As a result, the average number of retained observations per frame tends to decrease with longer horizons.

In practice, this leads to progressive \emph{context thinning}: each frame contributes fewer retained entries, reducing the density of usable geometric evidence.
For geometric tasks, where stable estimation relies on redundant multi-view observations, such thinning weakens the effective context available for depth, pose, and reconstruction.

Importantly, degradation is not determined solely by the number of retained entries.
Different retention policies with similar budgets can preserve very different amounts of usable geometric context, depending on how the retained memory is distributed.

\subsection{Context Fragmentation Across Space and Time}

Geometric inference depends not only on how many observations are retained, but also on whether they form coherent multi-view context across frames.

Under token-level retention, selection is performed independently at the token level and is not explicitly constrained to preserve within-frame structure or cross-view compatibility.
As a result, retained entries may no longer correspond to mutually compatible observations of the same underlying structure.

For example, different frames may retain entries from unrelated regions or surfaces, leading to memory that remains populated but lacks stable correspondences.
We refer to this effect as \emph{context fragmentation}, since the retained geometric context becomes scattered across both space and time.

This effect is particularly pronounced in regimes with already limited redundancy, such as low-parallax motion, blur, occlusion, or texture-poor regions.
In such cases, fragmentation further reduces the availability of jointly usable observations and makes inference more sensitive to local inconsistencies.

\subsection{Directional Concentration, Attention Concentration, and Retention Granularity}
\label{app:directional_concentration}

Context thinning and context fragmentation affect not only what is retained, but also how retained memory is used during attention.
For a given layer \(l\), let \(\hat{\mathbf{k}}_i\in\mathbb{R}^{d_l}\) denote an \(\ell_2\)-normalized key vector, where \(d_l\) is the key dimension of layer \(l\).
For a query step \(k\), we partition retained keys into a local subset \(R_k\) and a non-local subset \(N_k\), where \(R_k\) contains keys from a short temporal neighborhood of the query.

We define the local representative direction \(\boldsymbol{\mu}_R\in\mathbb{R}^{d_l}\) by
\[
\boldsymbol{\mu}_R=
\frac{\frac{1}{|R_k|}\sum_{i\in R_k}\hat{\mathbf{k}}_i}
{\left\|\frac{1}{|R_k|}\sum_{i\in R_k}\hat{\mathbf{k}}_i\right\|_2},
\]
and measure directional contrast by
\begin{equation}
\Delta_k
=
\frac{1}{|R_k|}\sum_{i\in R_k}\cos(\hat{\mathbf{k}}_i,\boldsymbol{\mu}_R)
-
\frac{1}{|N_k|}\sum_{j\in N_k}\cos(\hat{\mathbf{k}}_j,\boldsymbol{\mu}_R).
\label{eq:delta_appendix}
\end{equation}
A larger \(\Delta_k\) indicates that keys in the local subset are, on average, more strongly aligned with their dominant direction than the rest of memory.
As in the main text, we use \(\Delta_k\) as a diagnostic of relative directional dominance under bounded memory.

Under standard dot-product attention, attention weights are obtained by applying a softmax over query--key similarity scores.
Let \(\mathbf{q}\in\mathbb{R}^{d_l}\) denote the query vector.
Then
\[
\alpha_i=\frac{\exp(s_i)}{\sum_j \exp(s_j)},
\qquad
s_i=\mathbf{q}^{\top}\hat{\mathbf{k}}_i.
\]
Because the softmax amplifies relative score differences, even moderate directional imbalance can produce increasingly concentrated retrieval over a smaller subset of retained keys.
The following proposition formalizes this effect in a simplified two-group model.

\paragraph{Directional contrast and attention concentration under a two-group model.}
Let the retained memory at query step \(k\) be partitioned into a local subset \(R_k\) and a non-local subset \(N_k\), with cardinalities
\[
m=|R_k|,
\qquad
n=|N_k|.
\]
Assume a simplified score model in which all keys in \(R_k\) share a common similarity score \(a\), and all keys in \(N_k\) share a common similarity score \(b\), with \(a>b\).
The softmax attention weights are
\[
\alpha_i=
\begin{cases}
\dfrac{e^{a}}{m e^{a}+n e^{b}}, & i\in R_k, \\[6pt]
\dfrac{e^{b}}{m e^{a}+n e^{b}}, & i\in N_k.
\end{cases}
\]
Define the score gap
\[
\delta = a-b > 0.
\]

\begin{proposition}
For fixed \(m\) and \(n\), the attention entropy
\[
h_{\mathrm{att}}(\boldsymbol{\alpha})
=
-\sum_i \alpha_i \log \alpha_i
\]
is monotonically decreasing in \(\delta\).
Consequently, the effective memory size
\[
s_{\mathrm{eff}}=\exp\!\bigl(h_{\mathrm{att}}(\boldsymbol{\alpha})\bigr)
\]
is also monotonically decreasing in \(\delta\).
\end{proposition}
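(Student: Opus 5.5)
The plan is to reduce the attention distribution to a single scalar parameter and differentiate. Set $p=\dfrac{e^{\delta}}{m e^{\delta}+n}$, which is the common weight on each key of $R_k$; this comes from dividing numerator and denominator by $e^{b}$. Each key of $N_k$ then receives $q=\dfrac{1}{m e^{\delta}+n}$, and $mp+nq=1$. The total local mass is
\[
P \;=\; mp \;=\; \frac{m e^{\delta}}{m e^{\delta}+n}.
\]
$P$ is a logistic function of $\delta+\log(m/n)$. Its derivative is
\[
\frac{dP}{d\delta}=P(1-P)>0,
\]
so $P$ is strictly increasing on $\delta>0$.

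Next I would write the entropy as a function of $P$ alone. With $\alpha_i=P/m$ on $R_k$ and $\alpha_i=(1-P)/n$ on $N_k$,
\[
h_{\mathrm{att}}=-P\log\frac{P}{m}-(1-P)\log\frac{1-P}{n}
= H_2(P)+P\log m+(1-P)\log n ,
\]
where $H_2$ is the binary entropy. Differentiating in $P$ gives
\[
\frac{dh_{\mathrm{att}}}{dP}=\log\frac{1-P}{P}+\log\frac{m}{n}=\log\frac{m(1-P)}{nP}.
\]
Substituting $P/(1-P)=m e^{\delta}/n$ yields
\[
\frac{dh_{\mathrm{att}}}{dP}=\log\frac{m\cdot n}{n\cdot m e^{\delta}}=-\delta<0 .
\]
By the chain rule,
\[
\frac{dh_{\mathrm{att}}}{d\delta}=-\delta\,P(1-P).
\]
This is strictly negative for every $\delta>0$, since $0<P<1$. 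A second way to see it: the entropy is maximal, equal to $\log(m+n)$, at the uniform point $\delta=0$, and $\delta$ moves $P$ monotonically away from $m/(m+n)$.

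Finally, $s_{\mathrm{eff}}=\exp(h_{\mathrm{att}})$ is a composition with the strictly increasing function $\exp$, so its monotonicity follows immediately. The step I expect to need the most care is the sign computation in $dh_{\mathrm{att}}/dP$, specifically the cancellation of the $\log(m/n)$ term against the logistic form of $P$ that leaves exactly $-\delta$. I would verify that cancellation explicitly rather than rely on intuition. I would also remark that the trivial cases are genuinely degenerate: if $m=0$ or $n=0$ the entropy is constant in $\delta$, so the claim needs $m,n\ge1$, with monotonicity strict on $\delta>0$.
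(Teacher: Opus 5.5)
Your proposal is correct and follows the paper's proof essentially step for step: you reduce to the local mass $p_R = me^{\delta}/(me^{\delta}+n)$, write $h_{\mathrm{att}}$ as binary entropy plus the linear terms $p_R\log m + (1-p_R)\log n$, compute $dh_{\mathrm{att}}/dp_R = -\delta$, and finish with the chain rule and monotonicity of $\exp$. Your remarks on the degenerate cases $m=0$ or $n=0$ and on strict monotonicity for $\delta>0$ make explicit what the paper leaves implicit.
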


\begin{proof}
Let
\[
p_R=\sum_{i\in R_k}\alpha_i
=\frac{m e^{a}}{m e^{a}+n e^{b}}
=\frac{m e^{\delta}}{m e^{\delta}+n},
\qquad
p_N=1-p_R=\frac{n}{m e^{\delta}+n}.
\]
Since all weights are tied within each group, the entropy can be written as
\[
h_{\mathrm{att}}(\boldsymbol{\alpha})
=
- p_R \log \frac{p_R}{m} - p_N \log \frac{p_N}{n}.
\]
Equivalently,
\[
h_{\mathrm{att}}(\boldsymbol{\alpha})
=
h_{\mathrm{bin}}(p_R)+p_R\log m + (1-p_R)\log n,
\]
where
\[
h_{\mathrm{bin}}(p)
=
-p\log p-(1-p)\log(1-p)
\]
denotes the binary entropy.

Now
\[
\frac{d p_R}{d\delta}
=
\frac{m n e^{\delta}}{(m e^{\delta}+n)^2}
>0.
\]
By the chain rule,
\[
\frac{d h_{\mathrm{att}}}{d\delta}
=
\frac{d h_{\mathrm{att}}}{d p_R}\frac{d p_R}{d\delta}.
\]
Using
\[
h_{\mathrm{att}}(p_R)
=
h_{\mathrm{bin}}(p_R)+p_R\log m +(1-p_R)\log n,
\]
we obtain
\[
\frac{d h_{\mathrm{att}}}{d p_R}
=
\log\frac{1-p_R}{p_R}+\log m-\log n
=
\log\frac{m(1-p_R)}{n p_R}.
\]
Substituting
\[
p_R=\frac{m e^{\delta}}{m e^{\delta}+n},
\qquad
1-p_R=\frac{n}{m e^{\delta}+n},
\]
gives
\[
\frac{m(1-p_R)}{n p_R}
=
\frac{m\cdot \frac{n}{m e^{\delta}+n}}{n\cdot \frac{m e^{\delta}}{m e^{\delta}+n}}
=
e^{-\delta}.
\]
Hence
\[
\frac{d h_{\mathrm{att}}}{d p_R}
=
\log(e^{-\delta})
=
-\delta
<0.
\]
Since \(\frac{d p_R}{d\delta}>0\), it follows that
\[
\frac{d h_{\mathrm{att}}}{d\delta}<0.
\]
Therefore \(h_{\mathrm{att}}(\boldsymbol{\alpha})\) decreases monotonically with \(\delta\), and so does \(s_{\mathrm{eff}}=\exp(h_{\mathrm{att}}(\boldsymbol{\alpha}))\).
\end{proof}

This proposition shows that a larger score gap induces more concentrated attention and reduces the effective number of retained keys contributing to retrieval.
Under the additional assumption that the query direction remains positively aligned with \(\boldsymbol{\mu}_R\), a larger \(\Delta_k\) implies that keys in \(R_k\) enjoy a stronger average directional advantage over those in \(N_k\).
In the two-group model above, this corresponds to a larger score gap \(\delta\), and therefore to lower entropy and a smaller effective memory size.
From this perspective, \(\Delta_k\) serves as a useful proxy for the tendency of attention to concentrate on a smaller subset of retained memory.

\paragraph{Retention granularity and local directional dispersion.}
The same diagnostic also clarifies why retention granularity matters.
Define
\[
A_R=
\frac{1}{|R_k|}\sum_{i\in R_k}\cos(\hat{\mathbf{k}}_i,\boldsymbol{\mu}_R),
\qquad
A_N=
\frac{1}{|N_k|}\sum_{j\in N_k}\cos(\hat{\mathbf{k}}_j,\boldsymbol{\mu}_R),
\]
so that
\[
\Delta_k=A_R-A_N.
\]
Because both \(\hat{\mathbf{k}}_i\) and \(\boldsymbol{\mu}_R\) are unit-normalized,
\[
\|\hat{\mathbf{k}}_i-\boldsymbol{\mu}_R\|_2^2
=
2-2\cos(\hat{\mathbf{k}}_i,\boldsymbol{\mu}_R),
\]
and therefore
\begin{equation}
A_R
=
1-\frac{1}{2|R_k|}\sum_{i\in R_k}\|\hat{\mathbf{k}}_i-\boldsymbol{\mu}_R\|_2^2.
\label{eq:AR_identity}
\end{equation}

Now partition the local retained set \(R_k\) into frame groups \(S_t\).
For each frame group \(S_t\), define its mean direction \(\bar{\mathbf{k}}_t\in\mathbb{R}^{d_l}\) by
\[
\bar{\mathbf{k}}_t
=
\frac{1}{|S_t|}\sum_{i\in S_t}\hat{\mathbf{k}}_i,
\]
and its intra-frame directional dispersion by
\[
\sigma_t^2
=
\frac{1}{|S_t|}\sum_{i\in S_t}\|\hat{\mathbf{k}}_i-\bar{\mathbf{k}}_t\|_2^2.
\]

\begin{proposition}
Let \(R_k=\bigsqcup_t S_t\) be a partition of the local retained keys into frame groups, and define
\[
w_t=\frac{|S_t|}{|R_k|}.
\]
Then
\begin{equation}
A_R
\le
1-\frac{1}{2}\sum_t w_t \sigma_t^2.
\label{eq:AR_upper_bound}
\end{equation}
Consequently,
\begin{equation}
\Delta_k
\le
1-\frac{1}{2}\sum_t w_t \sigma_t^2 - A_N.
\label{eq:Delta_upper_bound}
\end{equation}
\end{proposition}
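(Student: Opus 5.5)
The plan is to start from the identity \eqref{eq:AR_identity}, which writes \(A_R\) as \(1\) minus half the average squared distance of the local keys to \(\boldsymbol{\mu}_R\). It therefore suffices to show
\[
\frac{1}{|R_k|}\sum_{i\in R_k}\|\hat{\mathbf{k}}_i-\boldsymbol{\mu}_R\|_2^2 \;\ge\; \sum_t w_t\sigma_t^2 .
\]
This is a weighted within-group variance bound, and it follows from the standard bias--variance (parallel-axis) decomposition applied group by group.

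Concretely, fix a frame group \(S_t\). For any fixed vector \(\mathbf{c}\), in particular \(\mathbf{c}=\boldsymbol{\mu}_R\), expand \(\hat{\mathbf{k}}_i-\mathbf{c}=(\hat{\mathbf{k}}_i-\bar{\mathbf{k}}_t)+(\bar{\mathbf{k}}_t-\mathbf{c})\). The cross term sums to zero over \(i\in S_t\), because \(\sum_{i\in S_t}(\hat{\mathbf{k}}_i-\bar{\mathbf{k}}_t)=\mathbf{0}\) by the definition of \(\bar{\mathbf{k}}_t\). This yields
\[
\frac{1}{|S_t|}\sum_{i\in S_t}\|\hat{\mathbf{k}}_i-\boldsymbol{\mu}_R\|_2^2=\sigma_t^2+\|\bar{\mathbf{k}}_t-\boldsymbol{\mu}_R\|_2^2\ge\sigma_t^2 .
\]
Next, multiply by \(w_t=|S_t|/|R_k|\) and sum over \(t\). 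Since \(R_k=\bigsqcup_t S_t\) is a partition, the left-hand side reassembles exactly into the average over \(R_k\). This gives the displayed inequality, with nonnegative slack \(\sum_t w_t\|\bar{\mathbf{k}}_t-\boldsymbol{\mu}_R\|_2^2\). Substituting into \eqref{eq:AR_identity} gives \eqref{eq:AR_upper_bound}. Subtracting \(A_N\) from both sides and using \(\Delta_k=A_R-A_N\) gives \eqref{eq:Delta_upper_bound}.

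There is no real obstacle here. The one point needing care is noting that the decomposition holds for an arbitrary center \(\mathbf{c}\). We need this because \(\boldsymbol{\mu}_R\) is the \emph{normalized} mean, which is not itself the unnormalized mean of \(R_k\), so we cannot appeal to a total-variance identity centered at the global mean. The argument uses nothing beyond \(\|\hat{\mathbf{k}}_i\|_2=\|\boldsymbol{\mu}_R\|_2=1\), which was already used in \eqref{eq:AR_identity}. It also requires \(\sum_i \hat{\mathbf{k}}_i\neq\mathbf{0}\) over \(R_k\) so that \(\boldsymbol{\mu}_R\) is well defined, and we assume this implicitly.
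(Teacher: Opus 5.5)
Your proposal is correct and follows the paper's proof essentially step for step. Both start from the identity \eqref{eq:AR_identity}, apply the per-group variance decomposition centered at \(\boldsymbol{\mu}_R\) with the cross term vanishing, drop \(\|\bar{\mathbf{k}}_t-\boldsymbol{\mu}_R\|_2^2\ge 0\), weight by \(w_t\) and sum over the partition, and then subtract \(A_N\). Your remark that \(\boldsymbol{\mu}_R\) must be well defined is a harmless extra beyond what the paper states.
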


\begin{proof}
Starting from \eqref{eq:AR_identity},
\[
A_R
=
1-\frac{1}{2|R_k|}\sum_{i\in R_k}\|\hat{\mathbf{k}}_i-\boldsymbol{\mu}_R\|_2^2.
\]
We decompose the sum over frame groups:
\[
\frac{1}{|R_k|}\sum_{i\in R_k}\|\hat{\mathbf{k}}_i-\boldsymbol{\mu}_R\|_2^2
=
\frac{1}{|R_k|}\sum_t \sum_{i\in S_t}\|\hat{\mathbf{k}}_i-\boldsymbol{\mu}_R\|_2^2.
\]
For each group \(S_t\), variance decomposition gives
\[
\frac{1}{|S_t|}\sum_{i\in S_t}\|\hat{\mathbf{k}}_i-\boldsymbol{\mu}_R\|_2^2
=
\frac{1}{|S_t|}\sum_{i\in S_t}\|\hat{\mathbf{k}}_i-\bar{\mathbf{k}}_t\|_2^2
+
\|\bar{\mathbf{k}}_t-\boldsymbol{\mu}_R\|_2^2
=
\sigma_t^2+\|\bar{\mathbf{k}}_t-\boldsymbol{\mu}_R\|_2^2,
\]
where the cross term vanishes because
\[
\sum_{i\in S_t}(\hat{\mathbf{k}}_i-\bar{\mathbf{k}}_t)=0.
\]
Hence
\[
\frac{1}{|S_t|}\sum_{i\in S_t}\|\hat{\mathbf{k}}_i-\boldsymbol{\mu}_R\|_2^2
\ge
\sigma_t^2.
\]
Multiplying by \(|S_t|/|R_k|=w_t\) and summing over \(t\) yields
\[
\frac{1}{|R_k|}\sum_{i\in R_k}\|\hat{\mathbf{k}}_i-\boldsymbol{\mu}_R\|_2^2
\ge
\sum_t w_t \sigma_t^2.
\]
Substituting this into \eqref{eq:AR_identity} gives
\[
A_R
\le
1-\frac{1}{2}\sum_t w_t \sigma_t^2,
\]
which proves \eqref{eq:AR_upper_bound}.
Finally, since \(\Delta_k=A_R-A_N\), we immediately obtain
\[
\Delta_k
\le
1-\frac{1}{2}\sum_t w_t \sigma_t^2 - A_N.
\]
\end{proof}

This bound makes the role of retention granularity explicit.
If memory is controlled at excessively fine granularity, token-level selection can retain only a few highly aligned entries from the same local mode, causing the corresponding intra-frame dispersion \(\sigma_t^2\) to collapse toward zero.
In that case, \(A_R\) can approach \(1\), enlarging the directional contrast between \(R_k\) and \(N_k\).
By contrast, frame-level retention preserves each frame's KV contribution as a coherent frame-wise context unit.
Under the mild assumption that retained frame groups exhibit non-degenerate intra-frame directional dispersion, namely
\[
\sigma_t^2 \ge \epsilon > 0,
\]
we obtain
\[
A_R \le 1-\frac{\epsilon}{2},
\qquad
\Delta_k \le 1-\frac{\epsilon}{2}-A_N.
\]
Thus, structured frame-level retention imposes a lower bound on local directional dispersion and reduces the tendency of bounded memory to collapse onto a narrowly dominant local direction.
This is precisely why retaining frame-wise KV segments better preserves complementary geometric context under a fixed memory budget.
\paragraph{Connection to the empirical diagnostic.}
The quantity plotted in Fig.~\ref{fig:mismatch} is exactly the same diagnostic
\(\Delta_k = A_R - A_N\) analyzed above, now measured during streaming inference on the retained memory at each query step.
The propositions in this appendix provide the interpretation of that curve:
a larger \(\Delta_k\) implies a stronger directional advantage of the local retained subset over the non-local subset, which in turn corresponds to more concentrated attention and a smaller effective set of keys contributing to retrieval.
The variance bound further explains why this effect depends on retention granularity:
token-level selection can drive the intra-frame directional dispersion toward zero, allowing \(\Delta_k\) to grow, whereas frame-level retention preserves non-degenerate intra-frame variation and therefore suppresses this collapse.
From this perspective, Fig.~\ref{fig:mismatch} serves as an empirical counterpart to the analysis above, showing that the theoretically motivated diagnostic indeed increases more strongly under token-level bounded memory, while remaining more stable under FrameVGGT.
\subsection{Finite Useful Memory Scale}

These observations also help explain why increasing memory may exhibit diminishing returns.

When memory is very limited, performance is constrained by insufficient support, so increasing capacity improves results.
Beyond a certain scale, however, additional memory tends to include redundant or weakly informative observations, contributing less to downstream inference.

This effect is often more pronounced for token-level policies, which distribute memory across many partially informative fragments.
Frame-level retention instead encourages more structured allocation, and may therefore exhibit smoother scaling behavior under bounded streaming.
\paragraph{Why key space?}
In attention-based memory, Keys and Values play asymmetric roles: Keys determine how memory is addressed by future queries, while Values provide the retrieved content.
Accordingly, redundancy for memory selection is most naturally measured in key space.
By summarizing each frame's incremental KV contribution into a prototype $v_t^{(l)}$, we obtain a compact descriptor of its retrieval direction while preserving block-level coherence.
\begin{figure}[t]
    \centering
    \includegraphics[width=\linewidth]{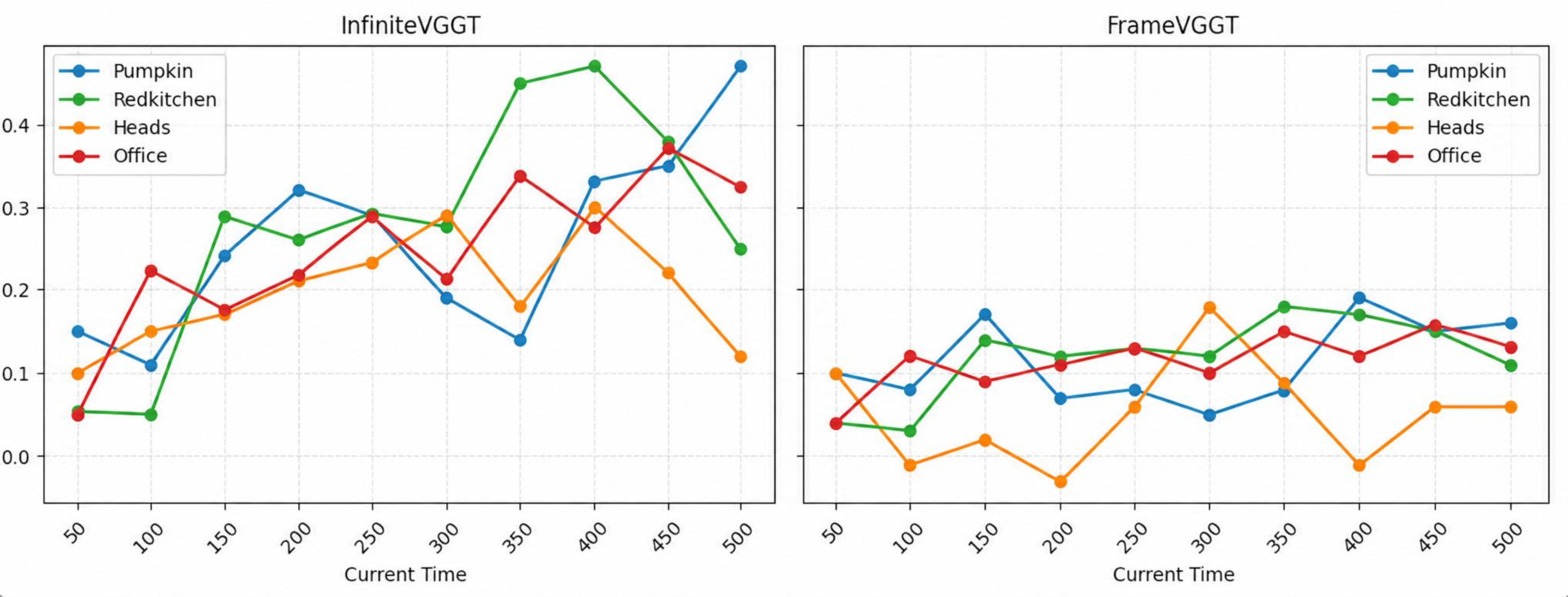}
    \caption{
Temporal evolution of $\Delta_k$ under bounded memory.
Increasing $\Delta_k$ indicates stronger concentration and more brittle fusion.
Token-level retention shows higher and growing $\Delta_k$, while FrameVGGT remains more stable.
    }
    \label{fig:mismatch}
\end{figure}

\section{Anchor Tier: Long-Range Reference Selection}
\label{sec:anchor_tier}

To complement the bounded mid-term bank, we maintain a sparse \emph{anchor tier} that preserves a small number of long-range reference frames.
While the mid-term bank provides the primary bounded memory over recent-to-mid horizons, anchors serve as persistent geometric references when local context becomes unreliable, e.g., under blur, occlusion, weak parallax, or rotation-dominant motion.
This separation allows the model to retain both dense local evidence and sparse global references under a fixed memory budget.

Anchor promotion is regulated by a temporal sparsity constraint
\begin{equation}
g_t = t - t_{\mathrm{last}} \ge G,
\end{equation}
where \(t\), \(t_{\mathrm{last}}\), and \(G\) are scalars.
This prevents over-concentration in locally redundant intervals.
Among eligible candidates, we prioritize frames that are both geometrically reliable and non-redundant.

\paragraph{Frame-level reliability.}
We define the anchor reliability score as
\begin{equation}
\phi(i) = q_i\, s_i,
\end{equation}
where \(q_i \in \mathbb{R}\) denotes a frame-level model-confidence score and \(s_i \in \mathbb{R}\) measures image sharpness.

For \(q_i\), we use a lightweight frame-level confidence proxy derived from the model's own geometric predictions.
Specifically, let \(\{c_{i,u}\}_{u=1}^{N_i}\) denote per-pixel or per-token confidence values at frame \(i\), where \(c_{i,u}\in\mathbb{R}\) and \(N_i\) is the number of pixels or tokens used for aggregation.
Depending on the backbone, these may correspond to matching confidence, visibility confidence, or prediction confidence.
We aggregate them into a single frame-level score by
\begin{equation}
q_i = \frac{1}{N_i}\sum_{u=1}^{N_i} c_{i,u},
\end{equation}
followed by min--max normalization within a sliding temporal buffer.
Intuitively, \(q_i\) favors frames whose predicted geometry is more self-consistent and reliable for later reuse.

For \(s_i\), we use a blur-sensitive image-quality proxy based on the variance of the Laplacian:
\begin{equation}
s_i = \operatorname{Var}\!\left(\nabla^2 I_i\right),
\end{equation}
where \(I_i\) denotes frame \(i\), and \(\nabla^2 I_i\) is its grayscale Laplacian response.
This quantity is high for sharp images with rich local structure and low for blurred or low-detail frames.
We normalize \(s_i\) to \([0,1]\) within the same temporal buffer before combining it with \(q_i\).

\paragraph{Anchor novelty.}
To encourage diversity, we additionally require novelty with respect to the current anchor set \(A_t\).
Let \(\bar{\mathbf{p}}_i \in \mathbb{R}^{d_p}\) denote the normalized pose signature of candidate frame \(i\), where \(d_p\) is the pose-signature dimension.
We define the novelty score as
\begin{equation}
\nu(i) = \min_{a \in A_t} \left(1 - \langle \bar{\mathbf{p}}_i, \bar{\mathbf{p}}_a \rangle \right).
\end{equation}
This discourages promoting frames that are geometrically redundant with already retained anchors.

A candidate frame \(i\) is promoted only if
\begin{equation}
g_t \ge G,
\qquad
\phi(i) \ge \tau_{\mathrm{conf}},
\qquad
\nu(i) \ge \tau_{\mathrm{novel}},
\end{equation}
and the anchor tier retains at most \(M_{\mathrm{anc}}\) frames using FIFO eviction, except for the first frame, which is always preserved as a persistent global reference.

From a retrieval perspective, the anchor tier extends coverage beyond the effective horizon of the mid-term bank by preserving sparse but non-local geometric evidence.
This reduces the tendency of bounded memory to over-concentrate on recent observations and improves robustness over long sequences with little additional memory overhead.

\section{Additional Reconstruction Visualization}
\label{app:more_recon}

\begin{figure}[t]
    \centering
    \includegraphics[width=\linewidth,height=0.86\textheight,keepaspectratio,page=1,trim=0.5cm 0.3cm 0.5cm 0.3cm,clip]{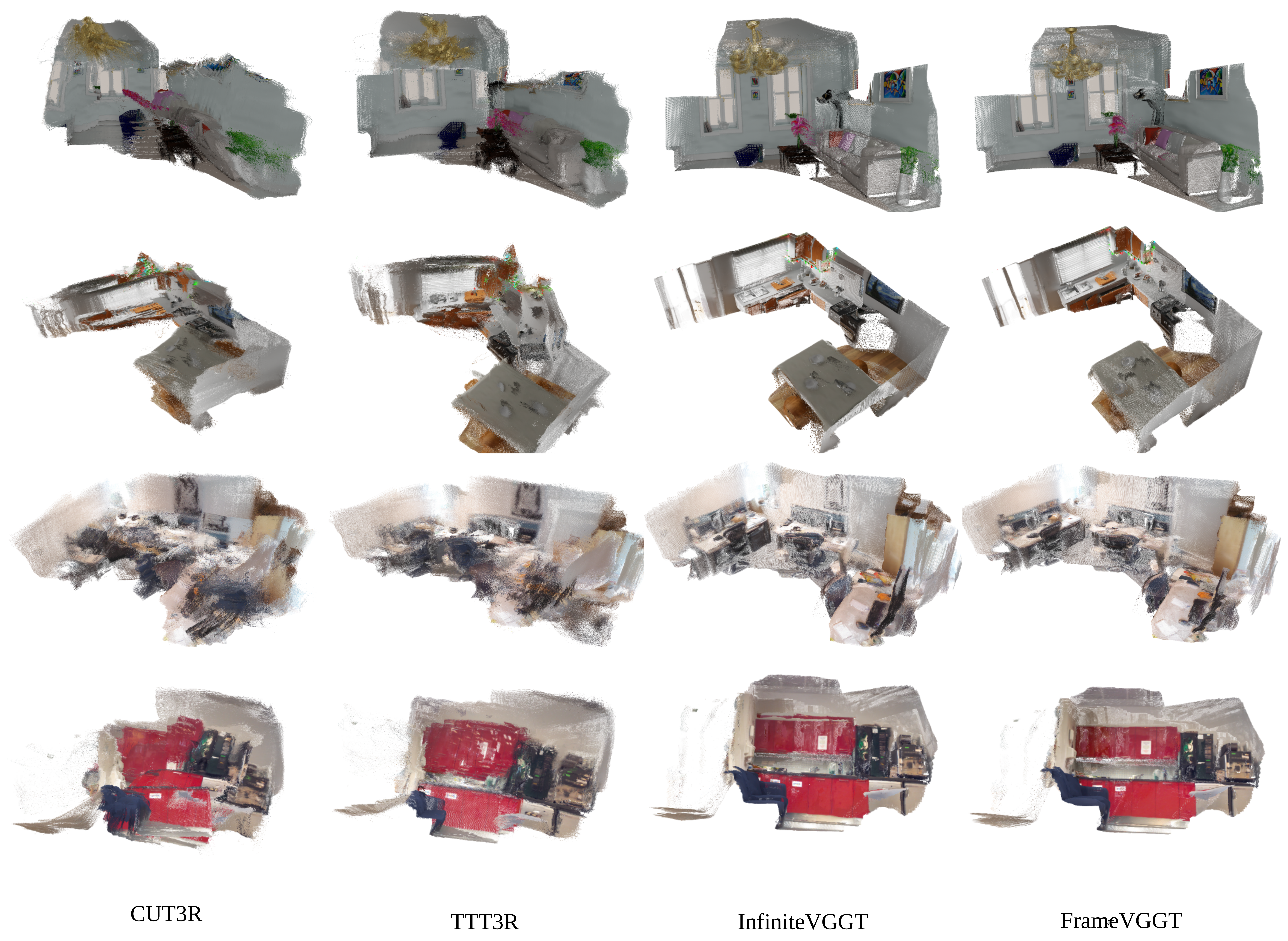}
    \caption{Additional reconstruction visualizations on 7-Scenes and NRGBD.
    We show representative renderings from FrameVGGT over extended streaming sequences.
    These examples complement the quantitative reconstruction results in the main paper by illustrating long-horizon structural consistency under bounded memory.}
    \label{fig:more_recon_vis}
\end{figure}

\begin{figure}[t]
    \centering
    \includegraphics[width=\linewidth,height=0.86\textheight,keepaspectratio,page=1,trim=0.5cm 0.3cm 0.5cm 0.3cm,clip]{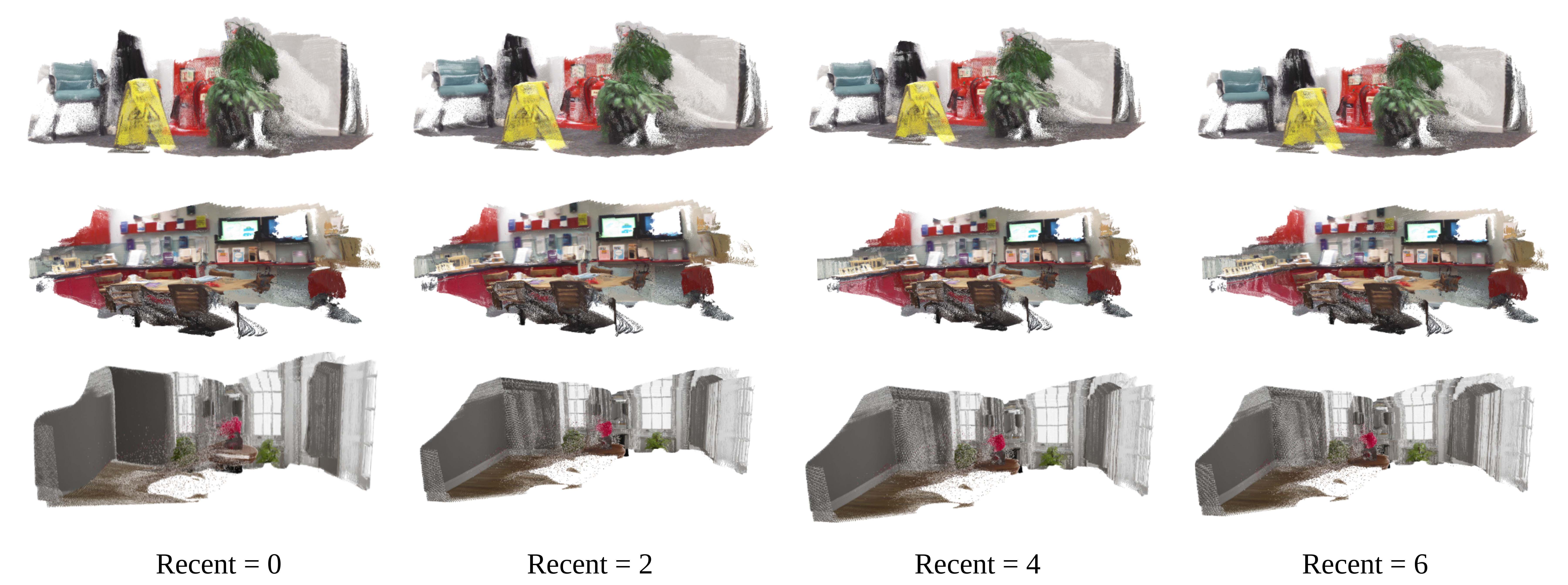}
    \caption{Additional reconstruction visualizations with \textbf{recent-only} memory on 7-Scenes and NRGBD.}
    \label{fig:recon_rec_vis}
\end{figure}

\begin{figure}[t]
    \centering
    \includegraphics[width=\linewidth,height=0.86\textheight,keepaspectratio,page=1,trim=0.5cm 0.3cm 0.5cm 0.3cm,clip]{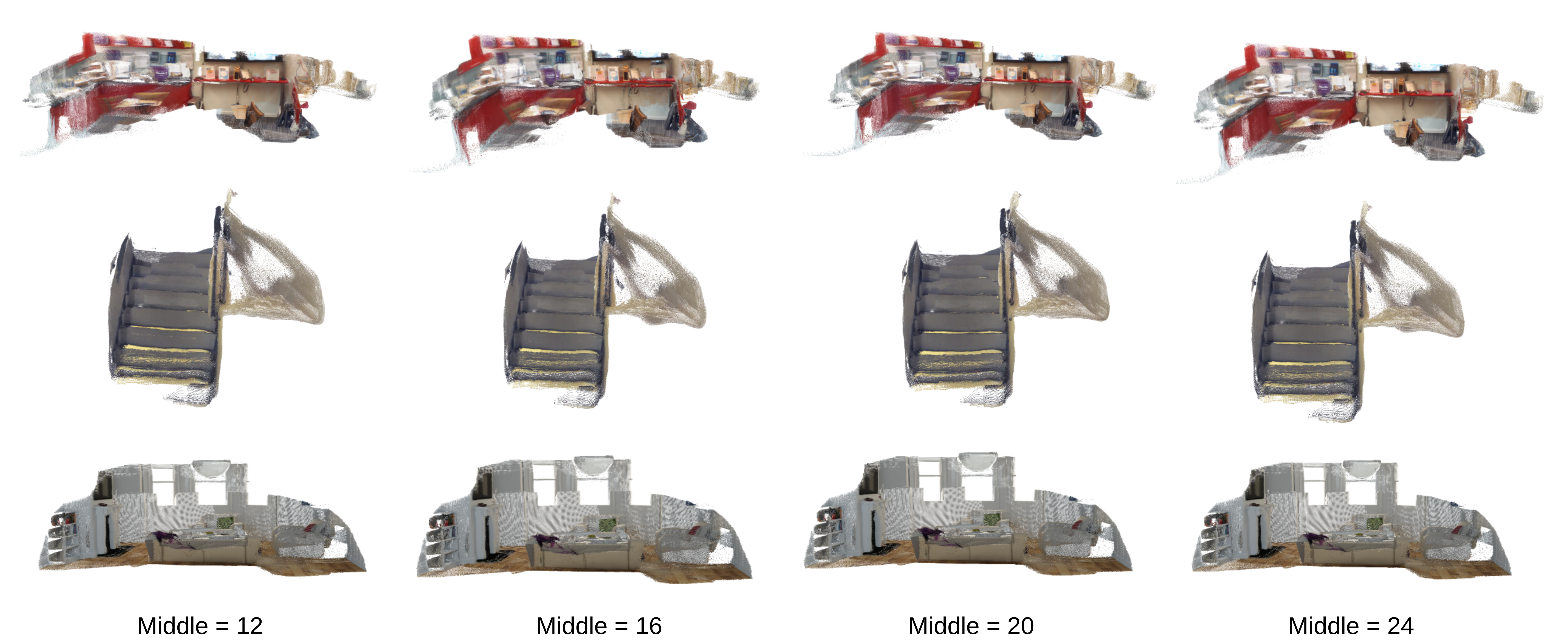}
    \caption{Additional reconstruction visualizations with \textbf{mid-term} memory on 7-Scenes and NRGBD.}
    \label{fig:recon_middle_vis}
\end{figure}

\begin{figure}[t]
    \centering
    \includegraphics[width=\linewidth,height=0.86\textheight,keepaspectratio,page=1,trim=0.5cm 0.3cm 0.5cm 0.3cm,clip]{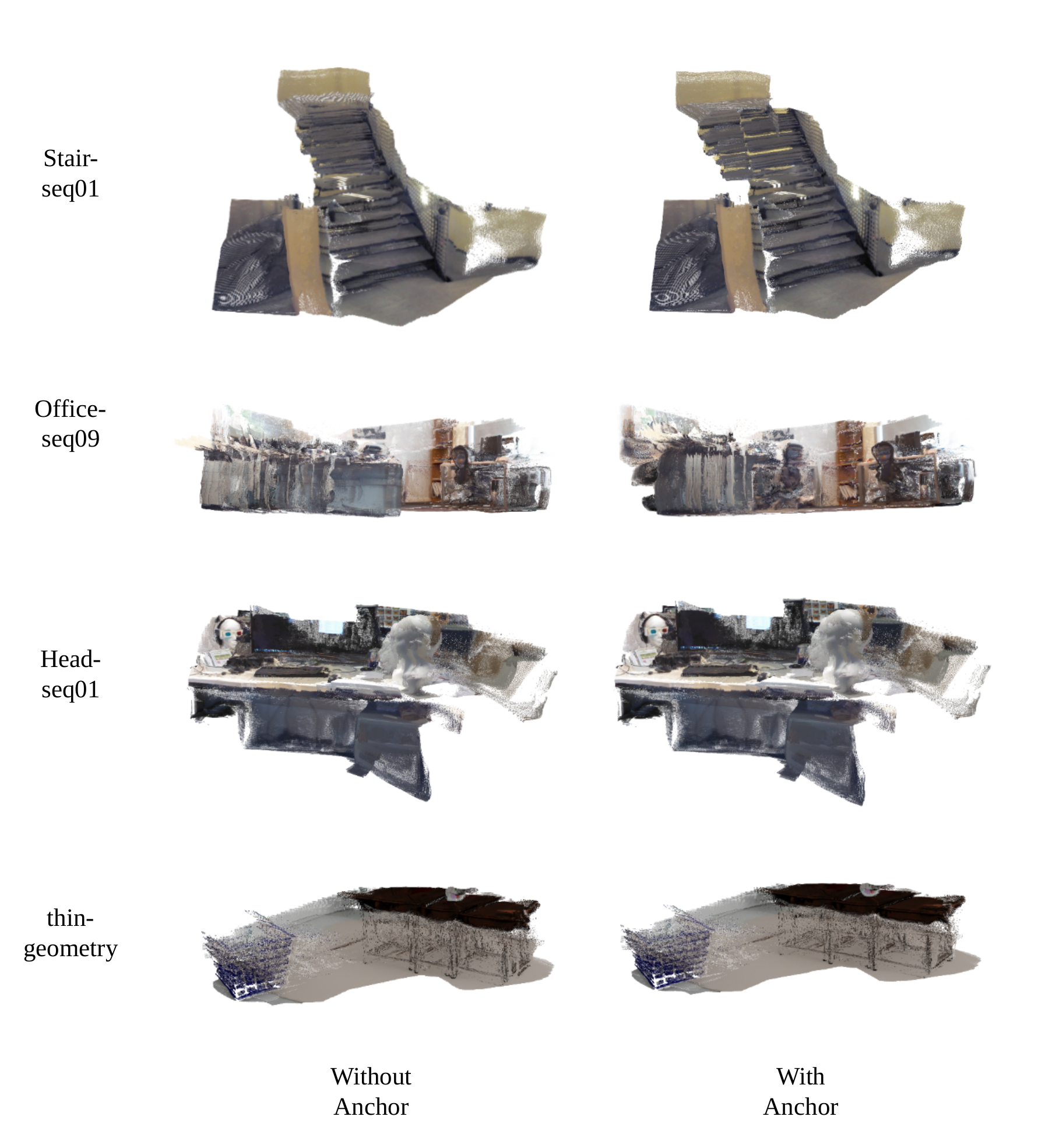}
    \caption{Additional reconstruction visualizations with the \textbf{global anchor tier} on 7-Scenes and NRGBD.}
    \label{fig:recon_anchor_vis}
\end{figure}

\begin{table}[t]
\centering
\footnotesize
\setlength{\tabcolsep}{3pt}
\renewcommand{\arraystretch}{1.08}
\caption{Reconstruction results on 7-Scenes and NRGBD. Best results are shown in bold.}
\label{tab:recon_anchor}
\resizebox{\linewidth}{!}{%
\begin{tabular}{l cccccc|cccccc}
\toprule
{Method}
& \multicolumn{6}{c|}{7-Scenes}
& \multicolumn{6}{c}{NRGBD}
\\
\cmidrule(lr){2-7}\cmidrule(lr){8-13}
& Acc$\downarrow$ & Acc$_{\text{med}}\downarrow$ & Comp$\downarrow$ & Comp$_{\text{med}}\downarrow$ & NC$\uparrow$ & NC$_{\text{med}}\uparrow$
& Acc$\downarrow$ & Acc$_{\text{med}}\downarrow$ & Comp$\downarrow$ & Comp$_{\text{med}}\downarrow$ & NC$\uparrow$ & NC$_{\text{med}}\uparrow$
\\
\midrule
Ours (Without Anchor) & 0.028 & 0.009 & 0.019 & 0.0040 & 0.564 & 0.598 & 0.054 & 0.035 & 0.030 & 0.0060 & 0.670 & 0.782 \\
Ours (With Anchor)    & 0.026 & 0.009 & 0.018 & 0.0040 & 0.571 & 0.607 & 0.053 & 0.034 & 0.031 & 0.0057 & 0.682 & 0.793 \\
\bottomrule
\end{tabular}%
}
\end{table}

Before turning to pose trajectories, we further examine the long-horizon behavior of reconstructed geometry.
Compared with per-frame depth maps, reconstruction renderings provide a more direct view of whether the retained memory preserves \emph{global structural consistency} as evidence accumulates over time.

Fig.~\ref{fig:more_recon_vis} presents additional qualitative reconstruction results on 7-Scenes and NRGBD.
Across extended streaming horizons, the reconstructed geometry remains visually coherent: dominant planes, major scene boundaries, and overall layout structure are stably preserved.
Although regions with weaker multi-view context may exhibit mild local uncertainty as the sequence grows, we do not observe large-scale structural collapse or severe global distortion.
This suggests that the retained memory continues to preserve sufficiently coherent geometric context even in prolonged streaming regimes.

To further isolate the role of memory structure, Figs.~\ref{fig:recon_rec_vis}--\ref{fig:recon_anchor_vis} compare three retention regimes: \textbf{recent-only}, \textbf{mid-term}, and the \textbf{global anchor tier}.

\paragraph{Recent-only memory (Fig.~\ref{fig:recon_rec_vis}).}
When memory is concentrated on only the most recent observations, reconstruction is driven primarily by short-range temporal continuity.
This often suffices in early segments, where adjacent views still provide dense local context.
However, as the horizon extends, the lack of longer-range geometric witnesses makes the reconstruction increasingly vulnerable to accumulated drift, duplication artifacts, and weakened global consistency.
In our framework, this corresponds to a regime where local context remains available but the broader geometric skeleton of the scene is no longer sufficiently preserved.

\paragraph{Mid-term memory (Fig.~\ref{fig:recon_middle_vis}).}
Retaining mid-horizon context substantially improves long-range stability by preserving additional support from intermediate temporal neighborhoods.
Compared with the recent-only regime, dominant structures are more consistently maintained and drift accumulation is visibly reduced.
This indicates that memory aligned with intermediate-range context is more effective at sustaining global reconstruction coherence.
Nevertheless, when the sequence extends well beyond the effective temporal span of the mid-term bank, residual artifacts can still emerge, reflecting the finite coverage of a bounded but non-global memory tier.

\paragraph{Global anchor tier (Fig.~\ref{fig:recon_anchor_vis}).}
Introducing the global anchor tier further strengthens reconstruction stability by preserving sparse but persistent long-horizon reference frames.
Qualitatively, anchors help suppress duplication artifacts, stabilize scene layout, and better maintain a consistent global structure over extended horizons.
In effect, the anchor tier acts as a sparse geometric skeleton that complements the online mid-term pathway with globally distributed reference points.
This qualitative trend is also reflected quantitatively in Tab.~\ref{tab:recon_anchor}: introducing anchors preserves comparable performance on clean or easier sequences while improving robustness on more challenging cases, where long-horizon drift and structural duplication are more likely to accumulate.
Thus, the anchor tier improves hard-case stability without sacrificing performance in well-conditioned streams.

Overall, these additional visualizations suggest that long-horizon reconstruction quality depends not only on the amount of retained evidence, but more fundamentally on the \emph{temporal structure} of that evidence.
Structured retention better preserves the memory organization required for stable geometry, which in turn yields stronger long-range reconstruction consistency.
These observations are consistent with the trajectory behavior analyzed next.

\section{Additional Depth Visualization}
\label{app:more_depth}

\begin{figure}[t]
    \centering
    \includegraphics[width=\linewidth,height=0.82\textheight,keepaspectratio,page=1,trim=0.5cm 0.3cm 0.5cm 0.3cm,clip]{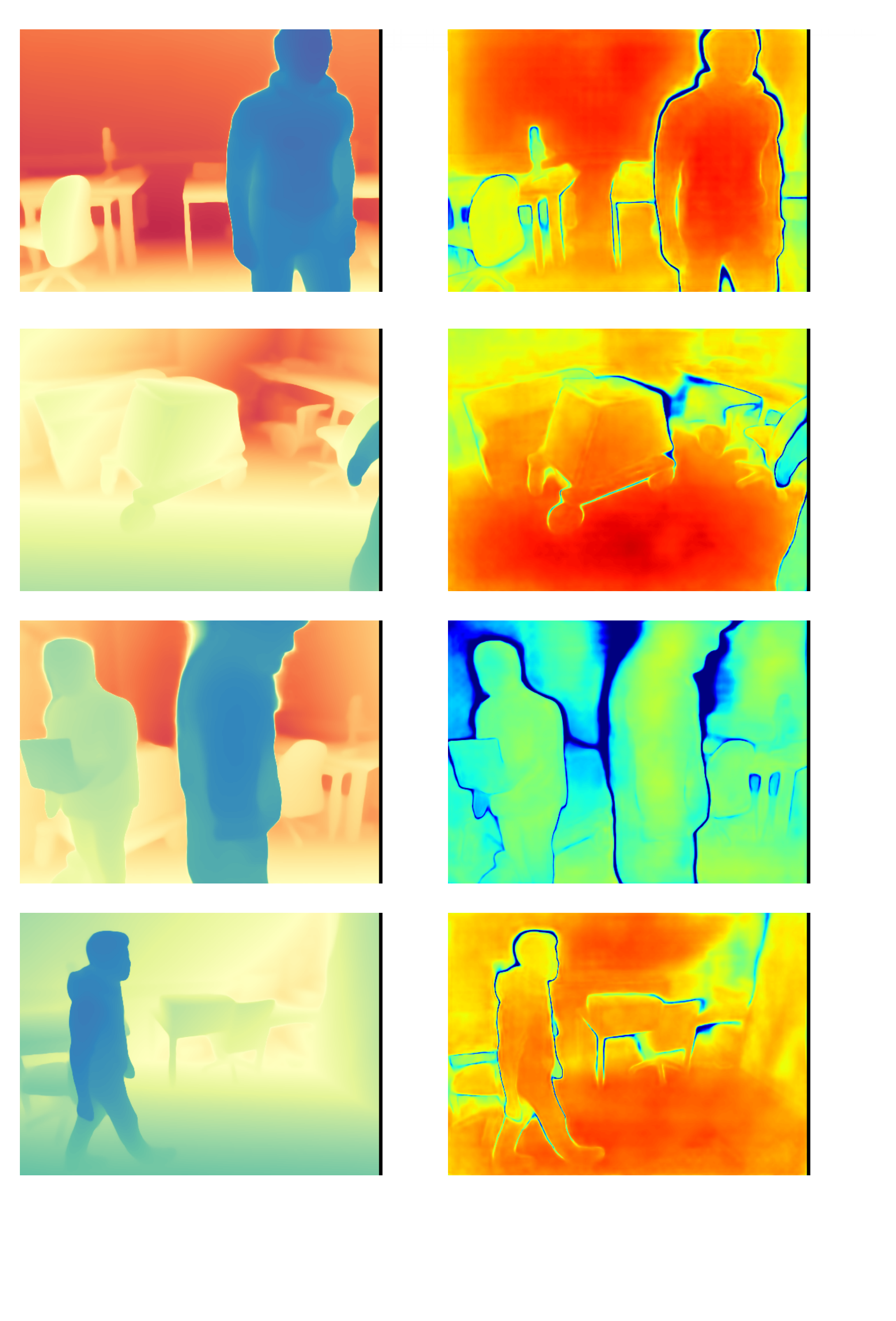}
    \caption{Additional depth visualization results on BONN.
    We visualize predicted depth maps from FrameVGGT over extended streaming sequences.}
    \label{fig:more_depth_vis}
\end{figure}

Before analyzing pose trajectories, we first examine the long-horizon behavior of predicted depth.
Depth stability provides a direct proxy for whether the retained memory continues to preserve sufficiently coherent local geometric context over time.

Unlike pose trajectories, depth visualizations across different methods are not strictly comparable in a fully fair manner because of scale-shift ambiguity, normalization differences, and colormap choices.
We therefore focus on the temporal behavior of FrameVGGT itself rather than presenting side-by-side method comparisons.

Fig.~\ref{fig:more_depth_vis} shows representative depth maps across extended streaming sequences on BONN.
In early segments, the predicted geometry is sharp and spatially coherent.
As time progresses, regions with weaker multi-view contex may exhibit mild local uncertainty; however, large-scale structural collapse, catastrophic distortion, or obvious degeneration of dominant layouts is not observed.
Major planes and scene boundaries remain temporally stable over long horizons, indicating that the retained memory continues to provide sufficiently concentrated memory for depth inference.

These qualitative observations are consistent with the intended role of structured memory retention:
preserving informative mid-horizon context helps maintain local geometric coherence even as the stream becomes progressively longer.
This depth-level stability forms the geometric basis for the trajectory behavior analyzed next.

\section{Additional Pose Visualization}
\label{app:more_pose}

\begin{figure}[t]
    \centering
    \includegraphics[width=\linewidth,height=0.82\textheight,keepaspectratio,page=1,trim=0.5cm 0.3cm 0.5cm 0.3cm,clip]{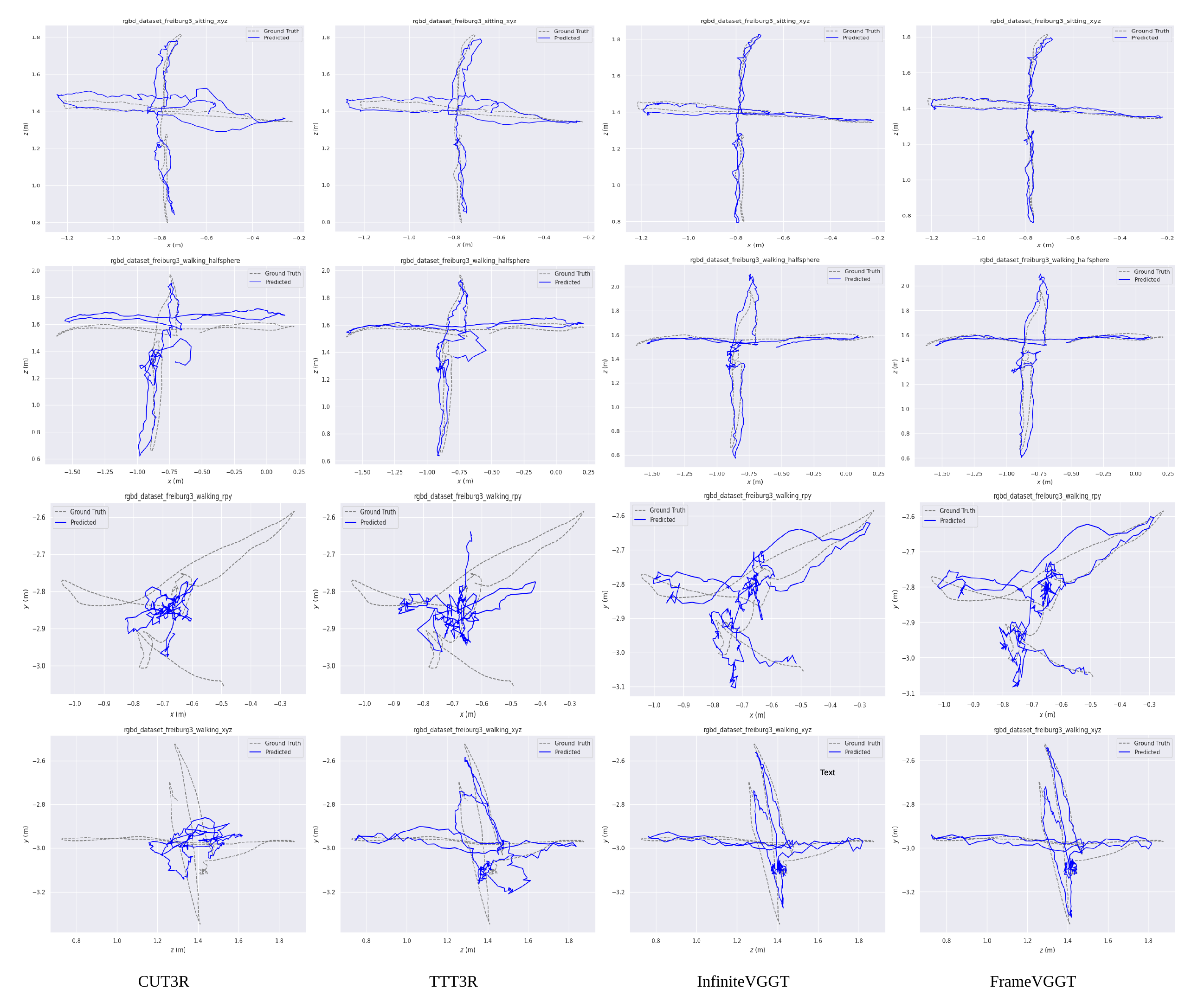}
    \caption{Additional pose visualization results on the TUM dataset. The figure shows representative trajectory comparisons over longer sequences.}
    \label{fig:more_pose_vis}
\end{figure}

To further characterize long-horizon pose behavior beyond aggregate metrics, we provide additional qualitative trajectory visualizations on TUM in Fig.~\ref{fig:more_pose_vis}, together with focused comparisons of different retention regimes in Figs.~\ref{fig:pose_middle_vis} and \ref{fig:pose_rec_vis}.
These figures serve as qualitative evidence of how bounded-memory structure affects long-range trajectory stability.

Fig.~\ref{fig:more_pose_vis} compares representative long-sequence trajectories across methods.
A consistent pattern is that deviations from the global trajectory shape accumulate over time:
many baselines remain reasonably stable in earlier segments but exhibit increasingly visible drift in later portions of the sequence.
Although the precise failure signatures vary---for example, gradual drift versus abrupt deviation---the common trend is the progressive erosion of global consistency under long-horizon bounded streaming.

Figs.~\ref{fig:pose_middle_vis} and \ref{fig:pose_rec_vis} isolate the role of temporal retention structure.
The \emph{mid-term} regime preserves intermediate-range history, which provides complementary constraints once motion extends beyond immediate neighbors.
This additional context helps stabilize the trajectory against drift that cannot be corrected using only near-adjacent evidence.
In contrast, the \emph{recent-only} regime concentrates capacity on short-range observations.
While this often preserves local continuity, it provides weaker constraints on long-horizon global consistency, especially when viewpoint diversity or baseline growth requires support from a broader temporal neighborhood.

Overall, these qualitative examples align with the quantitative results in the main paper:
preserving informative mid-horizon context improves long-range trajectory stability by better maintaining the geometric context structure needed for consistent pose estimation over extended streams.
\section{KV-Cache Budget and Memory Accounting}
\label{app:memory_budget}

To ensure fair comparison across different memory policies, all methods in our experiments operate under comparable KV-cache budget settings.
In particular, memory usage is measured by the effective footprint of the retained Transformer KV cache during streaming inference.

For a Transformer layer $l$, the KV cache size is proportional to
\begin{equation}
\mathrm{Mem}_{KV}^{(l)} \propto H_l \, T_l \, D_l \times 2 ,
\end{equation}
where $H_l$ is the number of attention heads, $T_l$ the number of retained tokens, and $D_l$ the feature dimension per head; the factor of $2$ accounts for keys and values.
Total KV-cache memory is obtained by summing this quantity across all layers.

Although different methods organize memory differently (e.g., token-level vs.\ frame/block-level retention), the reported memory footprint always reflects the actual retained KV tensors used during inference.
Thus, comparisons across methods correspond to comparable KV-cache budgets rather than unequal GPU allocations.

Unless otherwise stated, the reported memory usage excludes unrelated overheads such as dataset loading, visualization buffers, or evaluation pipelines, and reflects only the effective KV-cache footprint of the streaming model.

\begin{figure}[t]
    \centering

    \begin{subfigure}{\linewidth}
        \centering
        \includegraphics[width=\linewidth,height=0.35\textheight,keepaspectratio,page=1,trim=0.5cm 0.3cm 0.5cm 0.3cm,clip]{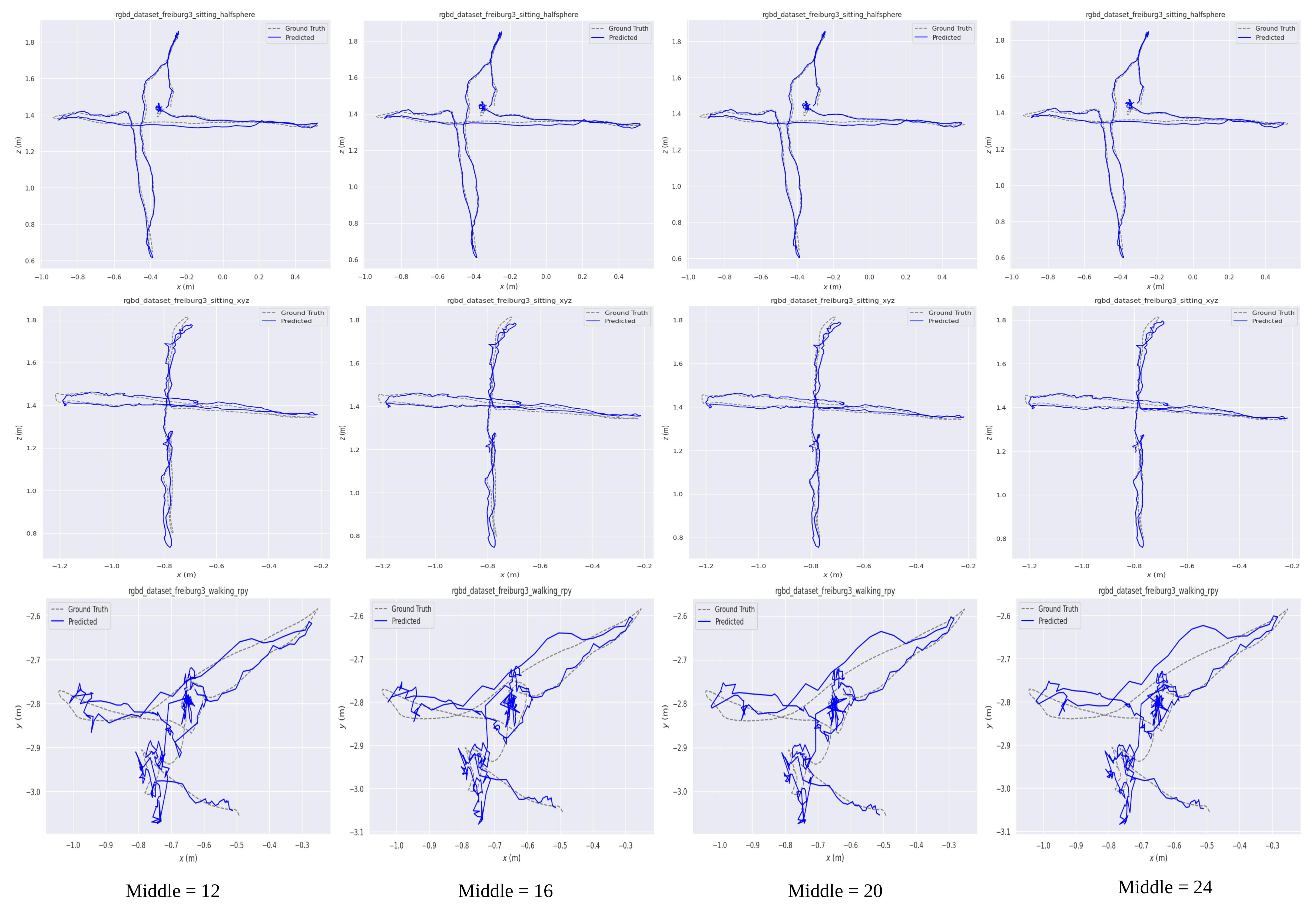}
        \caption{Additional visualization focusing on the behavior of the mid-term memory regime. Preserving intermediate-range history provides stronger temporal context beyond the immediate local context.}
        \label{fig:pose_middle_vis}
    \end{subfigure}

    \vspace{0.8em}

    \begin{subfigure}{\linewidth}
        \centering
        \includegraphics[width=\linewidth,height=0.35\textheight,keepaspectratio,page=1,trim=0.5cm 0.3cm 0.5cm 0.3cm,clip]{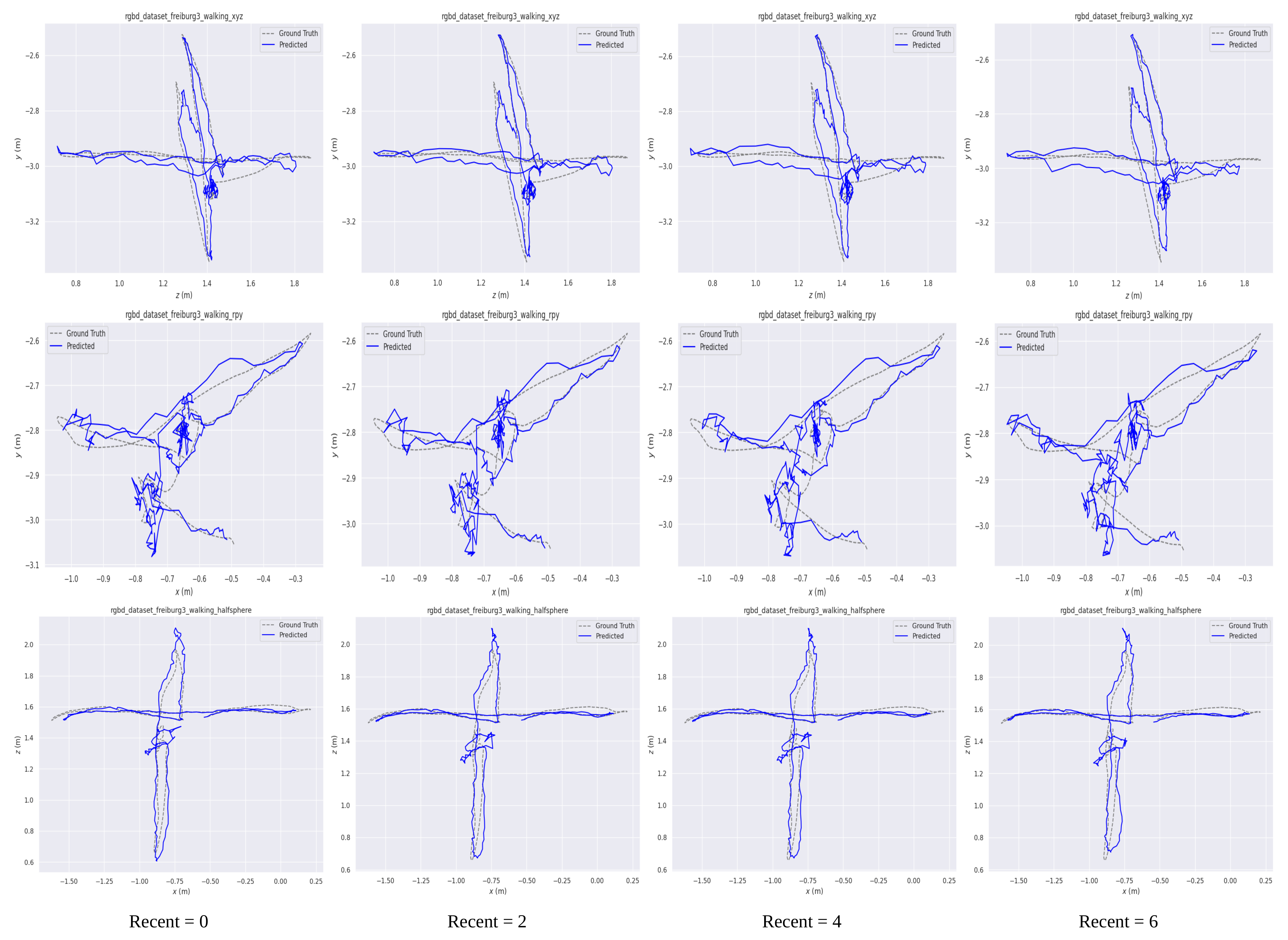}
        \caption{Additional visualization focusing on the behavior of the recent-memory regime. Retaining only recent observations is effective for short-range continuity but offers limited context over extended horizons.}
        \label{fig:pose_rec_vis}
    \end{subfigure}

\end{figure}

\end{document}